\definecolor{xuebin}{rgb}{0.7, 0.4, 1.0}
\newcounter{bxincomm}
\definecolor{aqua}{rgb}{0.00,0.67,0.80}
\icmltitlerunning{How Framelets Enhance Graph Neural Networks}
\begin{document}

\twocolumn[
\icmltitle{How Framelets Enhance Graph Neural Networks}



\icmlsetsymbol{equal}{*}

\begin{icmlauthorlist}
\icmlauthor{Xuebin Zheng}{equal,A}
\icmlauthor{Bingxin Zhou}{equal,A}
\icmlauthor{Junbin Gao}{A}
\icmlauthor{Yu Guang Wang}{B,C,G}\\
\icmlauthor{Pietro Li\`{o}}{D}
\icmlauthor{Ming Li}{E}
\icmlauthor{Guido Mont\'{u}far}{F,B}
\end{icmlauthorlist}

\icmlaffiliation{A}{The University of Sydney Business School, The University of Sydney, Camperdown, NSW 2006, Australia.}
\icmlaffiliation{B}{Max Planck Institute for Mathematics in the Sciences, Leipzig, Germany.}
\icmlaffiliation{C}{Institute of Natural Sciences and School of Mathematical Sciences, Shanghai Jiao Tong University, China.}
\icmlaffiliation{G}{School of Mathematics and Statistics, The University of New South Wales, Sydney, Australia.}
\icmlaffiliation{D}{Department of Computer Science and Technology, University of Cambridge, Cambridge, United Kingdom.}
\icmlaffiliation{E}{Key Laboratory of Intelligent Education Technology and Application of Zhejiang Province, Zhejiang Normal University, Jinhua, China.}
\icmlaffiliation{F}{Department of Mathematics and Department of Statistics, University of California, Los Angeles, United States}

\icmlcorrespondingauthor{Bingxin Zhou}{bzho3923@uni.sydney.edu.au}
\icmlcorrespondingauthor{Yu Guang Wang}{yuguang.wang@mis.mpg.de}

\icmlkeywords{Machine Learning, ICML}

\vskip 0.3in
]



\printAffiliationsAndNotice{\icmlEqualContribution} 

\begin{abstract}
This paper presents a new approach for assembling graph neural networks based on framelet transforms. The latter provides a multi-scale representation for graph-structured data. We decompose an input graph into low-pass and high-pass frequencies coefficients for network training, which then defines a framelet-based graph convolution. The framelet decomposition naturally induces a graph pooling strategy by aggregating the graph feature into low-pass and high-pass spectra, which considers both the feature values and geometry of the graph data and conserves the total information. The graph neural networks with the proposed framelet convolution and pooling achieve state-of-the-art performance in many node and graph prediction tasks. Moreover, we propose shrinkage as a new activation for the framelet convolution, which thresholds high-frequency information at different scales. Compared to ReLU, shrinkage activation improves model performance on denoising and signal compression: noises in both node and structure can be significantly reduced by accurately cutting off the high-pass coefficients from framelet decomposition, and the signal can be compressed to less than half its original size with well-preserved prediction performance.
\end{abstract}

\section{Introduction}
Graph neural networks (GNNs) are a powerful deep learning method for prediction tasks on graph-structured data \cite{wu2021comprehensive}. Most existing GNN models are spatial-based, such as GCN \cite{KiWe2017}, GAT \cite{velivckovic2017graph} and GIN \cite{xu2018how}. These methods compute graph convolution on vertices and edges in the form of message passing \cite{Gilmer_etal2017}, but leave the signal frequency of graph data unexplored. In this paper, we seek to exploit signal processing for GNNs. Graph framelets \cite{dong2017sparse,zheng2020decimated}, akin to traditional wavelets, provide a multiresolution analysis (MRA) for graph signals. The fully tensorized \emph{framelet transforms} guarantee an efficient design of graph convolution that combines low-pass and high-pass information, where the transforms only require graph Laplacian, Chebyshev polynomial approximation, and filter banks. We propose \emph{framelet convolution} that exploits the decomposition and reconstruction procedures of framelet transforms, and the network learns in the frequency domain.

The wavelet-like graph data analysis allows us to exploit traditional tools from signal processing. An effective practice is the \emph{shrinkage} that thresholds high-pass coefficients in the framelet representation. The multi-scale property of framelet convolution introduces diffusion for GNNs. The associated shrinkage defines a new type of activation that adapts to the diffusion scales in nonlinear feature transformation and extraction. Shrinkage in framelet convolution provides a mechanism for effective noise reduction of input graph data, where noise may appear in node and/or edge features. This ability is inherited from the traditional wavelet denoising model. Moreover, the shrinkage activation provides a way to compress graph signals. The shrinkage trims coefficients and significantly compresses the graph data representation while at the same time the underlying GNN reaches state-of-the-art performance in multiple tasks. The framelet MRA and shrinkage threshold provide GNNs with multi-scale and compression characteristics, which distinguishes our model from existing graph convolution methods.

The framelet transform naturally induces a graph pooling strategy by aggregating the different scales of framelet features. The consequent framelet pooling conserves the total information due to energy conservation of framelet spectra, and offers an efficient graph dimensionality reduction. Framelet-based GNNs outperform existing spatial or spectral-based methods on benchmark node and graph prediction tasks. Moreover, the framelet convolution with ReLU or shrinkage can achieve excellent performance in denoising both node features and graph structure. This behavior suggests that the framelets play an important role in bridging signal processing and graph neural networks.

\section{Related Works}
\paragraph{Graph Framelets and Transforms}
The construction of wavelet analysis on graphs was first explored by \citet{CrKo2003}. \citet{MaMh2008} used polynomials of a differential operator to build multi-scale transforms. 
The spectral graph wavelet transforms \cite{HaVaRe2011} define the graph spectrum from a graph's Laplacian matrix, where the scaling function is approached by the Chebyshev approximation. \citet{behjat2016signal} encoded energy spectral density to design tight frames on graphs with both graph topology and signal features. \citet{dong2017sparse} approximated piece-wise smooth functions with undecimated tight wavelet frames. Fast decomposition and reconstruction become possible with the filtered Chebyshev polynomial approximation and proper design of filter banks. 

The other regime of signal processing on graph data is backboned by the multiresolution analysis \cite{mallat1989theory} that establishes a tree-based wavelet system with localization properties. \citet{CrKo2003} defined the `h-hop' neighborhoods on binary graphs and \citet{GaNaCo2010} constructed Haar-like bases. The Haar-like orthonormal wavelet system \cite{ChFiMh2015} has been applied to deep learning tasks on undirected graphs \cite{wang2020haar,li2020fast,zheng2020graph}. Meanwhile, fast tight framelet filter bank transforms on quadrature-based framelets are explored on graph domain \cite{wang2019tight,zheng2020decimated} and manifold space \cite{WaZh2018} with a low redundancy rate.

\paragraph{Graph Convolution and Graph Pooling}
The theory of graph convolution \cite{bruna2013spectral} facilitated the later development of advanced deep learning methods. For example, spectral-based GNNs \cite{DeBrVa2016,xu2019graph,li2020fast,balcilar2021analyzing} transform graph signals to the spectral domain and process them with filter operations. Alternatively, spatial-based graph convolution performs node property prediction via aggregating feature information over neighborhood nodes \cite{KiWe2017,Gilmer_etal2017,wang2020gcn,Vignac2020,chen2020simple}.
For graph property prediction, one pursues topology-aware graph embedding via graph pooling. Some global pooling strategies refine vertex features in one-shot \cite{zhang2018end,lee2019self}, while others process graph information hierarchically \cite{cangea2018towards, gao2019graph, knyazev2019understanding,wang2020haar,Ma2020}.

\paragraph{Signal Compression and Denoising}
Signal compression is critical for high-speed signal transmission. Wavelets play an important role in compressing signal and have contributed to the prevalent JPEG 2000 \cite{jp2}. Our shrinkage framelet convolution provides an algorithm for compressing graph signals. Denoising problems have long been studied in image processing. Many models have been proposed for image restoration \cite{milanfar2013tour}. In particular, wavelets provide a sparse and multi-scale representation for images and have proved an impressive regularizer for reducing Gaussian white noise for signals in 2D \cite{figueiredo2003algorithm,cai2012image,Dong2013mra,Shen2010wavelet}. Graph spectral theory and graph wavelets have been widely used for image processing \cite{cheung2018graph}. Our convolution uses graph framelets and provides a solution to the denoising model for structured data using GNN training.

\section{Multiresolution Analysis of Graph Framelets} \label{sec:mra}
\begin{figure*}[h]
    \centering
    \begin{annotate}{\includegraphics[width=0.88\textwidth]{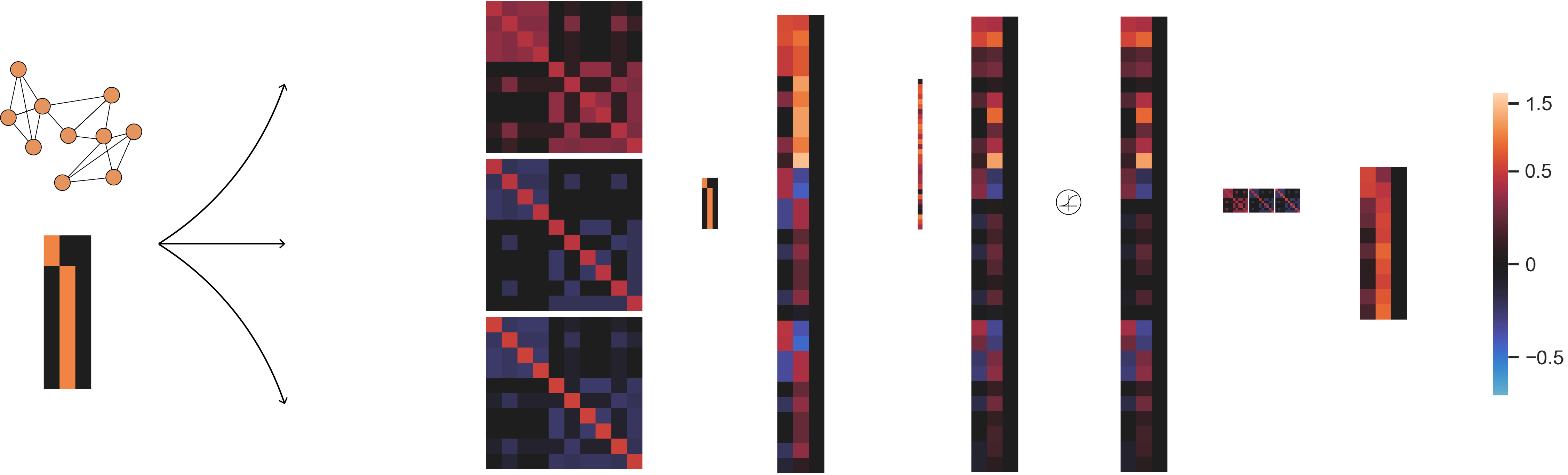}}{1}
    \arrow{-1.2,0}{-0.2,0}
    \arrow{0.65,0}{1.65,0}
    \arrow{2.3,0}{3.1,0}
    \arrow{3.9,0}{5.2,0}
    \note{-6.9,-1.7}{Feature Matrix}
    \note{-3.8,1.5}{Low-Pass}
    \note{-3.8,0.1}{High-Pass}
    \note{-3.8,-0.2}{(level 1)}
    \note{-3.8,-1.3}{High-Pass}
    \note{-3.8,-1.6}{(level 2)}
    \note{1.05,0.4}{${\rm diag}(\;\;)$}
    \draw [decorate,decoration={brace,amplitude=5pt,mirror,raise=4ex}]
    (-6,-1.7) -- (0,-1.7) node[midway,yshift=-2.8em]{{\scriptsize Decomposition}};
    \draw [decorate,decoration={brace,amplitude=5pt,mirror,raise=4ex}]
    (3.6,-1.7) -- (5.7,-1.7) node[midway,yshift=-2.8em]{{\scriptsize Reconstruction}};
    \end{annotate}
    \vspace{-3mm}
    \caption{Computational flow of the proposed undecimated framelet graph convolution (UFGConv). This is an illustration with shrinkage activation, which will be discussed in detail in Section~\ref{sec:framelet convolution}. Given a graph with structure (adjacency matrix) and feature information, the target is to properly embed the graph by graph convolution. The demonstrative sample is a graph with $10$ nodes and $3$ features extracted from \textbf{PROTEINS} in \textbf{TUDataset}. The framelet dilation and scale level are both set to default value $2$. The \textsc{UFGConv} applies tensor-based framelet transform, and constructs one low-pass and two high-pass \emph{framelet transform matrices} $\analOp_{r,j}$, which are then multiplied by the input feature matrix to produce the framelet coefficients. Moreover, these coefficients are processed by the trainable network filter and compressed by the shrinkage. Finally, the activated coefficients are reconstructed and sent back to the spatial domain as the convolution output by using the framelet transform matrices again with transposed alignment.}
    \label{fig:tgft}
    \vspace{-4mm}
\end{figure*}

Our convolution uses the undecimated graph framelets and their transforms, which were introduced by \citet[Section~3]{dong2017sparse}; \citet[Section~4.1]{zheng2020decimated}. Framelets on a specific graph $\gph=(V,\EG,\wG)\in L_2(\gph)$ are defined by a \emph{filter bank} and the spectrum of its graph Laplacian $\gl$. The filter bank $\filtbk:=\{\maska;\maskb[1],\ldots,\maskb[n]\}\in l_0(\Z)$ for a framelet system is a set of compactly supported sequences, where $n$ denotes the number of high pass filters. The \emph{low-pass} and \emph{high-pass filters} of the framelet transforms, $\maska$ and $\maskb[r]$, distill and represent the approximation and detail information of the graph signal. The scaling functions $\Psi=\{\scala;\scalb^{(1)},\ldots,\scalb^{(n)}\}$ with respect to the filter bank $\filtbk$ are used to generate the framelets. The $\scala$, $\scalb^{(r)}$ and their Fourier transforms are in $L_2(\R)$. For $\xi\in\R$, the filters satisfy the classic refining equations
\begin{equation*}
\label{eq:refinement:nonstationary}
    \FT{\scala}(2\xi) = \FS{\maska}(\xi)\FT{\scala}(\xi),\;
    \FT{\scalb^{(r)}}(2\xi) = \FS{\maskb}(\xi)\FT{\scala}(\xi), \; r=1,\ldots,n.
\end{equation*}

\paragraph{Graph Framelets}
Suppose $\{(\eigvm,\eigfm)\}_{j=1}^{N}$ are the eigenvalue and eigenvector pairs for $\gL$ of graph $\gph$ with $N$ nodes. The (undecimated) framelets at \emph{scale level} $j=1,\ldots,J$ for graph $\gph$ with the above scaling functions are defined, for $n = 1,\ldots,r$, by
\begin{equation}\label{eq:ufr}
\begin{aligned}
    \cfra(v) &=  \sum_{\ell=1}^{\NV} \FT{\scala}\left(\frac{\eigvm}{2^{j}}\right)\conj{\eigfm(\uG)}\eigfm(v)\\
    \cfrb{n}(v) &=  \sum_{\ell=1}^{\NV} \FT{\scalb^{(n)}}\left(\frac{\eigvm}{2^{j}}\right)\conj{\eigfm(\uG)}\eigfm(v),
\end{aligned}
\end{equation}
where $\cfra$ or $\cfrb{r}$ is the low-pass or high-pass framelet translated at node $p$. The low-pass and high-pass \emph{framelet coefficients} for a signal $f$ on graph $\gph$ are $v_{j,p}$ and $w_{j,p}^r$, which are the projections $\InnerL{\cfra,f}$ and $\InnerL{\cfrb{r},f}$ of the graph signal onto framelets at scale $j$ and node $p$. Here we use Haar-type filters for framelets \cite{dong2017sparse}. The dilation factor is $2^j$ with the \emph{dilation} (base) $2$.

The above framelet system is \emph{tight} if it provides an exact representation for any function in $L_2(\gph)$. The tightness is determined by filter banks (See Theorem 1 in Appendix B). This condition guarantees a unique representation of a graph signal with framelet coefficients. It also helps manipulate the graph data in the framelet (frequency) domain. 

The graph convolution developed in this work with a tight graph framelet system is effective for several reasons. First, the undecimated framelet transforms have a simplified formula that uses only graph Laplacian and filtered Chebyshev polynomial approximation, as will be further explained below. The resulting framelet transforms can be written as a sparse tensor and the corresponding framelet coefficients are evaluated fast in tensor computation. See Figure~\ref{fig:tgft} for the computational flow of framelet convolution. Moreover, the tight framelets on the graph have a low redundancy rate, which is analogous to the framelets on a manifold as considered by  \citet{WaZh2018,wang2019tight}. We give more discussion about framelets in the Appendix. 

\paragraph{Tensor-based $\gph$-framelet Transforms}
Framelet transforms map between a graph signal $f$ and its representative framelet coefficients. We point out an approximate formula for \eqref{eq:ufr}, which exploits the Chebyshev polynomial approximation to the filters $\FT{\scala}$ and $\FT{\scalb^{(r)}}$, and thus gives a simplified version and fast evaluation for framelet transforms. By the framelet transform theorem (in Appendix), the corresponding framelet transforms are implemented by $\analOp$ and $\synOp$, the decomposition and reconstruction operators. 
For graph signal $f$, we define $\analOp=\{\analOp_{r,j}| r=1,\dots,n; j=1,\dots,J\}\cup\{\analOp_{0,J}\}$ entry-wisely, where $\analOp_{r,1}f = \mathcal{T}_{r}^k(2^{-J}\gl)f$, and 
\begin{equation*}
    \analOp_{r,j}f = \mathcal{T}_{r}^k(2^{K+j-1}\gl)\mathcal{T}_{0}^k(2^{K+j-2}\gl)\cdots \mathcal{T}_{0}^k(2^{-K}\gl)f
\end{equation*}
for $j=2,\dots,J$. Here $\mathcal{T}_{r}^k$ is the $r$-degree Chebyshev polynomial, $\gl$ is the graph Laplacian, and $K$ is the constant determined by the maximum eigenvalue of $\gl$ that satisfies $\eigvm[{\rm max}]\leq 2^K\pi$. The $\analOp_{0,J} f=\{v_{J,p}\}_{p\in V}$ are the \emph{low-pass coefficients} and $\analOp_{r,j} f=\{w_{j,p}^r\}_{p\in V}$ are \emph{high-pass coefficients} of $f$. The $j$ indicates the scale level, and $r=1,\dots,n$ with $n$ the number of high-passes. The reconstruction operator $\synOp$ is the realignment of the framelet transform matrices of the decomposition operator $\analOp$.

Figure~\ref{fig:tgft} gives the fast $\gph$-framelet transform algorithm based on tensorized $\analOp$ and $\synOp$ with scale level $2$ and shrinkage threshold $\sigma=1$ (the meaning of $\sigma$ will be discussed in detail in Section~\ref{sec:shrinkiage and signal compression}). In practice, we turn the computation into merely sparse matrix multiplication by properly aligning the low-pass and high-pass elements of $\analOp$ and $\synOp$. The tensor-based framelet transforms have time complexity $\bigo{}{N^2(nJ+1)Kd}$ and space complexity $\bigo{}{N^2(nJ+1)d}$ for an $N$-node graph and $d$ features. Here, the $n$, $J$ and $K$ are constants independent of graph data. See Appendix for an empirical study of the complexity of framelet transforms on benchmarks.

\definecolor{blue1}{RGB}{85,113,171}
\definecolor{orange1}{RGB}{209,136,92}
\definecolor{green1}{RGB}{106,166,110}
\begin{figure*}[t]
    \centering
    \scriptsize
    \begin{annotate}{\includegraphics[width=0.96\textwidth]{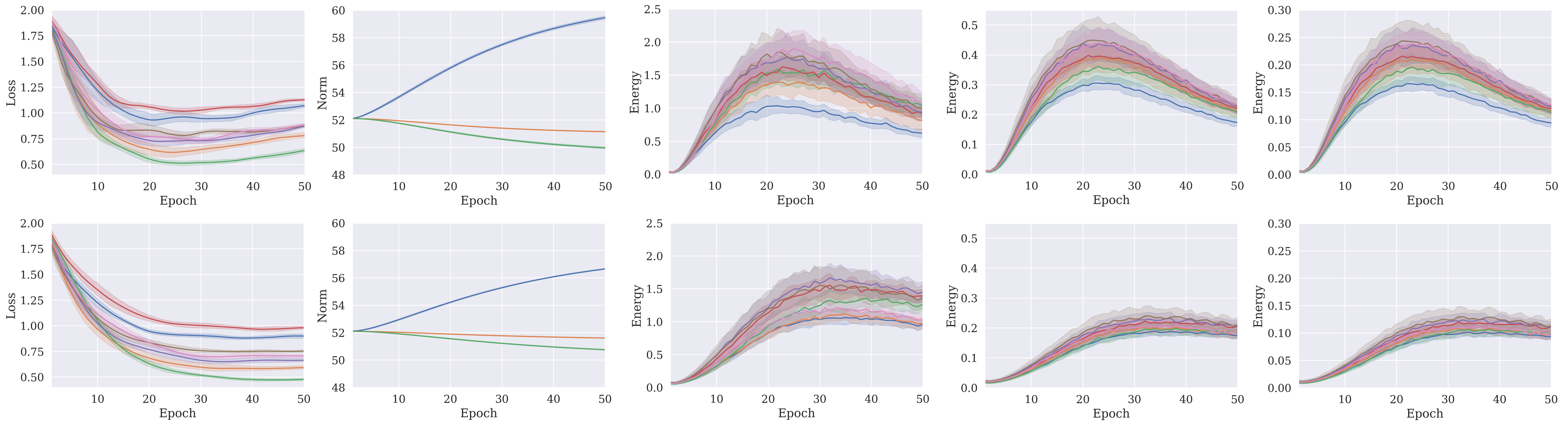}}{1}
    \note{-6.5,2.3}{Loss}
    \note{-3.3,2.3}{$\|\btheta\|$}
    \note{0.1, 2.3}{Low-Pass}
    \note{3.4, 2.3}{High-Pass 1}
    \note{6.7, 2.3}{High-Pass 2}
    \note{-2.3, 1.8}{\tiny\textcolor{blue1}{Low-Pass}}
    \note{-2.4, 1}{\tiny\textcolor{orange1}{High-Pass 1}}
    \note{-2.4, 0.6}{\tiny\textcolor{green1}{High-Pass 2}}
    \note{-2.3, -0.45}{\tiny\textcolor{blue1}{Low-Pass}}
    \note{-2.4, -1.2}{\tiny\textcolor{orange1}{High-Pass 1}}
    \note{-2.4, -1.55}{\tiny\textcolor{green1}{High-Pass 2}}
    \draw[ ] (-8.4,1.2,0) node [rotate=90] {ReLU};
    \draw[ ] (-8.4,-0.9,0) node [rotate=90] {Shrinkage};
    \end{annotate}
    \vspace{-4mm}
    \caption{Learning behavior of the final framelet convolutional layer of GNN with two \textsc{UFGConv} for \textbf{Cora}. Top is for \textsc{UFGConv} with ReLU activation in \eqref{eq:fgtconv relu}. Bottom is for \textsc{UFGConv} with Shrinkage activation in \eqref{eq:fgtconv shrinkage}. From left to right we show some key network learning properties: training loss, $l_2$ norm of network filter $\btheta$, power spectrum of framelet coefficients for low-pass and high-passes at scale levels $1$ and $2$. The curves of the quantities for each of $7$ label classes are shown. Framelets provide a good feature representation and shrinkage makes training more stable, where central information energy is conserved via thresholding high-pass coefficients.}
    \label{fig:loss energy norm}
    \vspace{-3mm}
\end{figure*}

\section{Framelet Convolution}\label{sec:framelet convolution} 
With the above $\gph$-framelet transforms, we can define a \emph{framelet (graph) convolution} similar to the spectral graph convolution. For \emph{network filter} $\btheta$ and input \emph{graph feature} $X\in \Rd[N\times d]$ of the graph $\gph$ with $N$ nodes, we define
\begin{equation}\label{eq:fgtconv relu}
    \btheta\star X = {\rm ReLU}\left(\synOp\left({\rm diag}(\btheta) (\analOp X')\right)\right),\; X'=X W.
\end{equation}
As mentioned, $\analOp X'$ is the framelet coefficient matrix for the transformed $X'$, where $\analOp$ is a sequence of $nJ+1$ transform matrices (each of size $N\times N$) for low-pass and high-passes. The size of the vector $\btheta$ is $(n J +1)N$ which matches the total number of the framelet coefficients for each feature. The network filter $\btheta$ lies in the frequency domain, each component of which is multiplied to the corresponding row of $\analOp X'$. The matrix $W$ in \eqref{eq:fgtconv relu} is a trainable weight matrix with dimension $d\times d'$.

We can replace the ReLU activation in \eqref{eq:fgtconv relu} with a wavelet shrinkage threshold function, or \emph{shrinkage}. As the framelet coefficients lie in multiple scales, the ``one size fits all'' criterion of ReLU could potentially damage the multi-scale property of the framelet representation. In contrast, the shrinkage threshold adapts to the varying scales of the coefficients and provides a more precise cutoff. Thus we can use fewer coefficients while maintaining a comparable performance in framelet representation, see Section~\ref{sec:experiment ufgconv}. The framelet convolution with \emph{shrinkage activation} is
\begin{equation}\label{eq:fgtconv shrinkage}
     \synOp\left({\rm Shrinkage}\bigl({\rm diag}(\btheta) (\analOp X')\bigr)\right),\; X'=X W.
\end{equation}
Different from ReLU which works in the spatial domain, in \eqref{eq:fgtconv shrinkage}, the shrinkage activation is carried on before the framelet reconstruction $\synOp$ as the shrinkage thresholds for the high-pass coefficients in the framelet domain.
Figure~\ref{fig:tgft} demonstrates a computational flow of the framelet convolution that learns the graph embedding of a graph with feature matrix $X^{\rm in}\in\R^{N\times d}$ by a framelet system of $2$ scale levels $(j=1,2)$ and $1$ high-pass filter $(r=0,1)$. Here, we omit the feature transformation step for a simple illustration. The decomposition $\analOp=\{\analOp_{0,2};\analOp_{1,1},\analOp_{1,2}\}$ transforms the input feature matrix with one low-pass and two high-pass operators. The operators can be rearranged to $\analOp^{\natural}:=[\analOp_{0,2},\analOp_{1,1},\analOp_{1,2}]^{\top}$ of three $N\times N$ sparse matrices. The coefficients $\analOp^{\natural}X^{\rm in}\in \R^{3N\times d}$ can then be obtained by matrix multiplication. The network learning propagates in the frequency domain, where one applies the network filter $\btheta$ to the framelet coefficients $\analOp^{\natural}X^{\rm in}$. For the framelet convolution with shrinkage, the filtered coefficients would be activated before applying $\synOp^{\natural}=(\analOp^{\natural})^{\top}$. Otherwise the ReLU activation will act on the reconstructed signal in the spatial domain.

\section{Shrinkage and Signal Compression} \label{sec:shrinkiage and signal compression}
Wavelet shrinkage is intimately linked to multiresolution properties of the wavelet transform in the classic wavelet theory. The shrinkage only applies to finer scales, i.e., detail coefficients \cite{donoho1995wavelet}, so that the wavelet scalogram (a paradigm of the time-frequency energy localization of a signal) experiences a minimal change. This property promises a meaningful estimator for signal compression and an explainable graph convolution with framelets.

\paragraph{Sparsity and Compression}
The high-pass coefficients in the frequency domain can be cut off by shrinkage thresholding. For example, the \emph{soft-thresholding} \cite{donoho1994ideal,donoho1995wavelet,tibshirani1996regression} defines
\begin{equation*}
    {\rm Shrinkage}(x) = {\rm sgn}(x)(|x|-\lambda)_{+} \quad\forall x\in \R,
\end{equation*}
where $\lambda$ is the threshold value. Any $x$ with its absolute value less than $\lambda$ shall return to zero. Applying the above soft-thresholding to the shrinkage activation in \eqref{eq:fgtconv shrinkage} only influences small high-pass framelet coefficients. We also consider the scale-dependent selection threshold with demarcation point \cite{donoho1995noising}: $\lambda = \sigma\sqrt{2\log(N)}/\sqrt{N}$ for $N$ coefficients. The hyperparameter $\sigma$ is an analogue to the noise level of the wavelet denoising model. We let $\sigma$ be associated with the magnitude order of the coefficients so it reflects the scale of the framelet representation. 

The shrinkage benefits framelet graph convolution by reducing noise in framelet coefficients and compressing the signal in the frequency domain simultaneously. The traditional wavelet denoising for 1D functions suggests that using shrinkage at high-pass coefficients can effectively filter out the Gaussian white noise in the mean square error sense. This is also true for our case when we embed shrinkage in graph convolution. In Figure~\ref{fig:attack}, the framelet convolution with shrinkage activation (\textsc{UFGConv-S}) outperforms the ReLU case (\textsc{UFGConv-R}) for reducing node and structure noises. Both methods surpass the classic spatial convolutions \textsc{GCN} \cite{KiWe2017} and \textsc{GAT} \cite{velivckovic2017graph}. Apart from denoising, graph convolution with shrinkage diminishes the proportion of non-zero framelet coefficients while maintains a comparable learning performance. We define the \emph{compression ratio} for a shrinkage framelet convolutional layer as the ratio of the number of non-zero coefficients after and before shrinkage. Tables~\ref{tab:citation_node} and \ref{tab:ogbn_arxiv} show that \textsc{UFGConv-S} compresses up to 70\% non-zero coefficients with top performance for various node classification tasks.

\paragraph{Framelet Spectrum, Training Loss and Network Capacity}
The coefficients after shrinkage activation are proportional to the framelet power spectrum at the coefficient scale. We thus let the threshold level $\sigma$ proportionate upon the framelet energy $\|\analOp_{r,j}X\|^2$ ($r>0$) for high-passes. For example, the framelet spectrum curves in training for the \textsc{UFGConv} of Figure~\ref{fig:loss energy norm} show a higher magnitude order of the low-pass (column 3) than those of high-passes (columns 4-5). This is because coefficients in high-passes reflect more detailed characteristics than in low-pass. Compared with the ReLU case (row 1), the shrinkage activation (row 2) filters out some high-pass coefficients in graph convolution, which results in much smaller framelet spectra for high-passes. In contrast, the low-pass shrinkage involves no cutoff, and the energy is less distinguishable from the ReLU case.

The training loss curve of each output feature (column 1) indicates that shrinkage allows for more stable training, with a monotonically decreasing loss. The splitting in low-pass and high-passes for loss suggests a more flexible and precise control of the training. It also opens the possibility of designing a weighted new loss taking account of framelet spectrum. Moreover, the $l_2$ norm of the network filter $\btheta$ (column 2) has an increasing trend for the low-pass part and a decreasing trend for the high-pass parts during training. This observation is identical to the spectral bias \cite{rahaman2019spectral}, whereby the fitting capacity of the framelet convolutional GNN with either ReLU or shrinkage activation comes from the low-pass channel.

\section{Robustness of Framelet Convolution under Feature and Structure Noises} \label{sec:robustness}
\begin{figure}[t]
    \centering
    \includegraphics[width=\columnwidth]{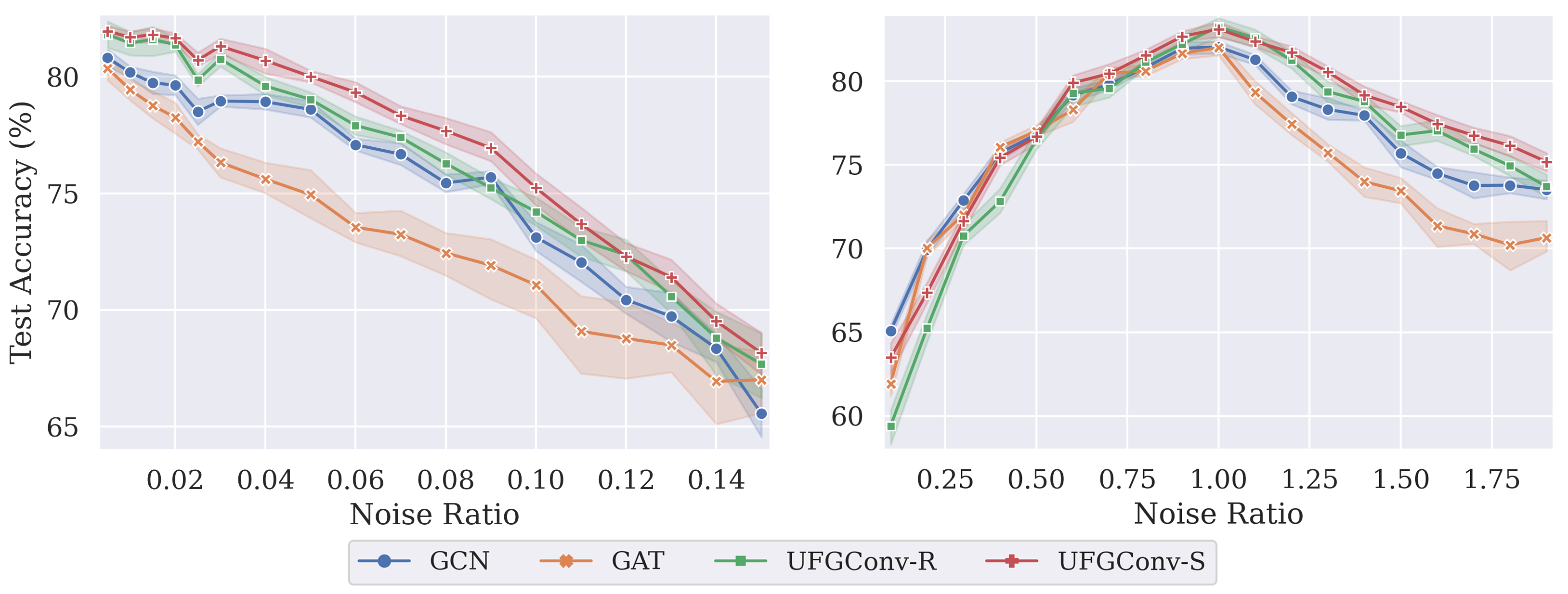}
    \vspace{-6mm}
    \caption{Node (left) and structure (right) perturbation analysis on \textbf{Cora}. Results from the other two datasets are in Appendix. The framelet dilation and scale level are both set to default value $2$, and the optimal threshold $\sigma$ for shrinkage is searched from $\{0.05, 0.1, 0.15\}$. Framelet convolution with shrinkage performs the best under both node and structure perturbations.} \label{fig:attack}
    \vspace{-4mm}
\end{figure}

Real-world data are usually noisy. For instance, graph data are sometimes polluted due to adversarial attacks as GNNs exchange node information \cite{xu2020adversarial}, where node feature and graph structure could both be perturbed. 

Our shrinkage framelet convolution has a motivation from the image deconvolution (or image restoration) model. Given the original and observed (degraded) image features $x$ and $y$, one defines
$y = Hx + \epsilon$, 
where $H$ represents the convolution matrix of $x$. The noise $\epsilon$ is assumed multivariate Gaussian. In statistical formulation, the model solves
\begin{equation*}
    \widehat{x} = {\rm arg~min}_{x} \bigl\{\log(\Pr(y|x)) - {\rm Pen}(x)\bigr\}
\end{equation*}
with a penalty function ${\rm Pen}$. On the real axis, wavelets are critical in restoring $x$ from noisy $y$ \cite{figueiredo2003algorithm,cai2012image,Dong2013mra,Shen2010wavelet}. With wavelet transform $\Phi$, the maximum penalized likelihood estimator (MPLE) takes the form
\begin{equation}\label{eq:deconv}
    \widehat{x} = \Phi^{\top} D \Phi y,  
\end{equation}
where $D$ acts as a denoising operation of $\Phi y$.

The above wavelet-based convolution can be generalized to graph data restoration with $x$ and $y$ replaced by clean and distorted features on the graph. That is, 
\begin{equation}\label{eq:adversarial attack model}
    y = Px + \epsilon,
\end{equation}
where $\epsilon$ is the entry-wise noise. The linear transform $P$ is a permutation of node features or a change of the adjacency matrix. Similar models using graph Fourier spectrum were considered in image restoration, see \citet[Sect~V.B]{cheung2018graph} and \citet{milanfar2013tour}. 

Similar to \eqref{eq:deconv}, our graph convolution replaces $\Phi$ with the graph framelet transform $\analOp$, and $D$ with the trainable network filter $\btheta$. Our shrinkage activation is partly motivated by LASSO \cite{tibshirani1996regression}. The latter uses shrinkage thresholding to denoise the signal in \eqref{eq:adversarial attack model} in high dimensions. 
Based on this connection, our proposed framelet convolution in Section~\ref{sec:framelet convolution} has a good potential against perturbation from \eqref{eq:adversarial attack model}. We test \textsc{UFGConv} with perturbed nodes and edges in citation network datasets (\textbf{Cora}, \textbf{Citeseer} and \textbf{Pubmed}). The noise $\epsilon$ follows Bernoulli distribution to match binary node features of the tasks. That is, we randomly change node feature or edge weight from 0/1 to its opposite 1/0. Figure~\ref{fig:attack} reports the test accuracy of \textsc{UFGConv}, \textsc{GCN} and \textsc{GAT}. The $x$-axis indicates the proportion of the distorted nodes or edges (which is equivalent to the signal-to-noise ratio (SNR)). As the SNR increases, \textsc{UFGConv} behaves well ahead of \textsc{GCN} and \textsc{GAT} while with higher test accuracy and smaller variance. The slightly lower performance of \textsc{UFGConv} only occurs when more than 50\% edges are removed, where misinformation dominates the graph structure. Moreover, the shrinkage thresholding in all situations manages to filter out more noise and thus achieves higher performance. This example illustrates the effectiveness of framelet convolution in predicting node property with node features or structure of graphs that are distorted.

\section{Framelet Pooling}
Graph pooling is a critical ingredient of GNNs when the model is predicting graph-level properties with a constant feature dimension but varying graph size and structure. We propose \emph{framelet pooling} for GNNs using framelet transforms. As an illustrative example, we use $2$ scale levels for framelet decomposition. Similar to the graph convolution in Section~\ref{sec:framelet convolution}, given a graph with feature matrix $X\in \R^{N\times d}$, we can obtain a set of framelet coefficients $\analOp_{r,j}f$ including one low pass $\analOp_{0,2}f$ at level $2$ and two high passes $\analOp_{1,1}f$ and $\analOp_{1,2}f$ at levels $1$ and $2$, respectively. Each scale-wise framelet coefficient is an $N\times d$ real-valued matrix, and its $i$th feature column $(\analOp_{r,j}X)_i$ for $i=1,\dots,d$ would be aggregated by the sum, or the sum of squares of the elements. The two aggregation methods correspond to two framelet pooling strategies (see below). The calculation compresses the $N\times d$ coefficients to a $d$-dimensional vector, and the pooled output from the three framelet coefficients results in $3$ $d$-dimensional vectors. Figure~\ref{fig:ufgpool} visualizes the computational flowchart for our pooling model.

\begin{figure}[t]
    \centering
    \begin{annotate}{\includegraphics[width=0.95\columnwidth]{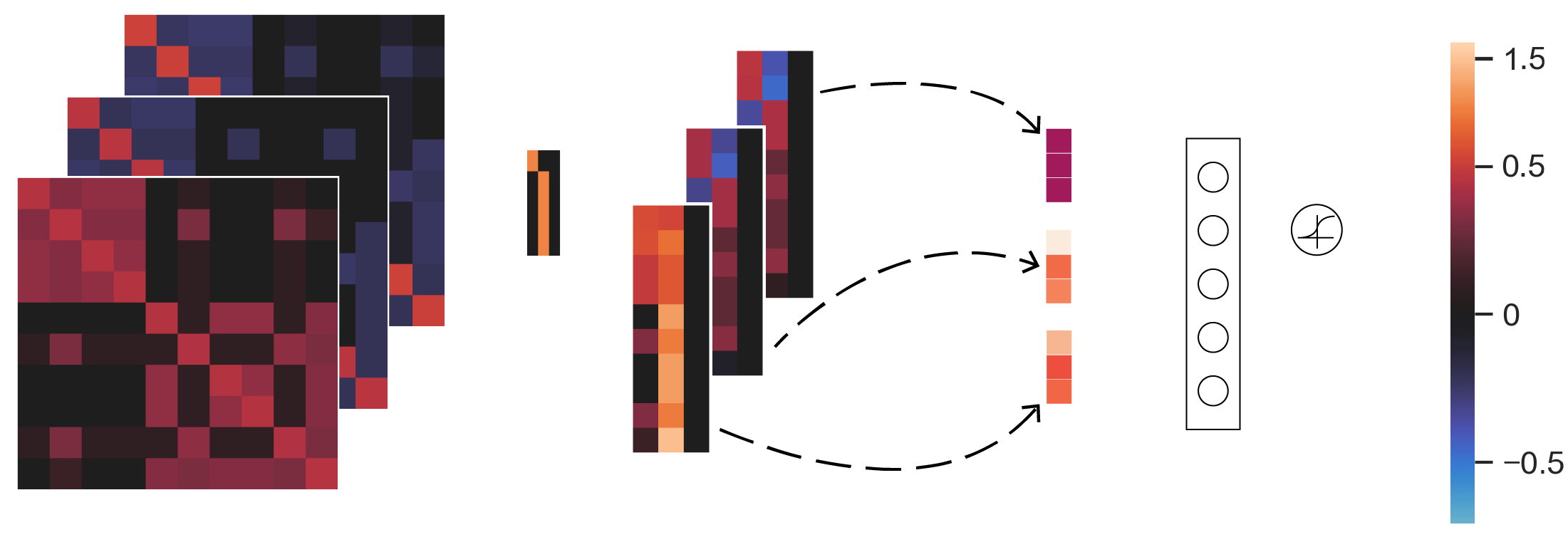}}{1}
    \arrow{-1.6,0}{-0.9,0}
    \arrow{1.5,0}{1.9,0}
    \arrow{2.4,0}{2.9,0}
    \note{-2.7,-1.25}{Framelet Transform Matrices}
    \note{2.2,-1}{Classifier}
    \note{3.1,0}{$\hat{y}$}
    \end{annotate}
    \vspace{-7mm}
    \caption{Framelet pooling for graph property prediction. The three framelet transform matrices are retrieved from Figure~\ref{fig:tgft} with the same protein sample and parameter setting. The scale-wise framelet coefficients are aggregated to three vectors by sum or sum of squares (framelet spectrum). The (1 low pass and 2 high passes) vectors are then concatenated as the readout for the classifier.}
    \label{fig:ufgpool}\vspace{-6mm}
\end{figure}

The framelet pooling benefits the network training by employing the information from multi-scales, as all scales in the framelet representation of the graph signal is taken into account. Depending on how we aggregate the framelet coefficients, we distinguish the two strategies by \textsc{UFGPool-SUM} or \textsc{UFGPool-Spectrum}. The latter aggregates the nodes by the wavelet (power) spectrum (that is, the sum of absolute squares of framelet coefficients over nodes, $\sum_{p\in V}|v_{j,p}|^2$ and $\sum_{p\in V}|w_{j,p}^{r}|^2$). In this way, the total information of the graph signal $X^{\rm in}$ is well-conserved after the pooling. The sum of wavelet power spectrum is equal to the total energy of the signal, that is, $\|X^{\rm pooled}\|=\|X^{\rm in}\|$ (see Theorem 1(iii) in Appendix). We present empirical evidence of the precedence of the proposed framelet pooling over existing graph pooling methods in Section~\ref{sec:experiment ufgpool}.

\section{Experiments}\label{sec:exp}
In this section, we show a variety of numerical tests for our framelet convolution and pooling. Section~\ref{sec:experiment ufgconv} tests the performance of framelet convolution (\textsc{UFGConv}) on node classification benchmarks. Section~\ref{sec:experiment ufgpool} studies the ablation for the proposed framelet pooling (\textsc{UFGPool}). Section~\ref{sec:sensitity} provides a sensitivity analysis for \textsc{UFGConv} in terms of dilation and scale. All experiments run in PyTorch on NVIDIA\textsuperscript{\textregistered} Tesla V100 GPU with 5,120 CUDA cores and 16GB HBM2 mounted on an HPC cluster.

\subsection{Framelet Convolution for Node Classification}
\label{sec:experiment ufgconv}
We test \textsc{UFGConv} with both ReLU and shrinkage activations on four node classification datasets. We denote the two variants by \textsc{UFGConv-R} and \textsc{UFGConv-S}. 

\paragraph{Dataset} 
The first experiment of node classification tasks is conducted on \textbf{Cora, Citeseer} and \textbf{Pubmed}, which are three benchmark citation networks. Moreover, we employ \textbf{ogbn-arxiv} from open graph benchmark \textbf{OGB} \cite{hu2020open} to illustrate the power of our framelet convolution on large-scale graph-structured data. 

\paragraph{Setup} 
We design our \textsc{UFGConv} model with two convolution layers for learning the graph embedding, the output of which is proceeded by a softmax activation for final prediction. Most hyperparameters are set to default, except for learning rate, weight decay, hidden units and dropout ratio in training. A grid search is conducted for fine tuning on these hyperparameters from the search space detailed in Appendix. Both methods are trained with the \textsc{Adam} optimizer. The maximum number of epochs is $200$ for citation networks and $500$ for \textbf{ogbn-arxiv}. All the datasets follow the standard public split and processing rules. The average test accuracy and its standard deviation come from $10$ runs. 

\paragraph{Baseline} 
The \textsc{UFGConv-R} and \textsc{UFGConv-S} are compared against other methods for node classification tasks. We consider multiple baseline models that are applicable to the tasks. For citation networks, the reported accuracy are retrieved from public results:
\textsc{MLP}, \textsc{DeepWalk} \citep{perozzi2014deepwalk}, \textsc{Chebyshev} \citep{DeBrVa2016} and \textsc{GCN} \citep{KiWe2017} from \citet{KiWe2017}; \textsc{Spectral CNN} \citep{bruna2013spectral} and \textsc{GWNN} \citep{xu2019graph}; \textsc{MPNN} \cite{Gilmer_etal2017}, \textsc{GraphSAGE} \cite{hamilton2017inductive}, \textsc{LanczosNet} \cite{liao2018lanczosnet} and \textsc{DCNN} \cite{singer2009diffusion} from \citet{liao2018lanczosnet}; and \textsc{GAT} \citep{velivckovic2017graph} from their authors. For \textbf{ogbn-arxiv}, we also compared with \textsc{Node2vec} \cite{grover2016node2vec}, \textsc{GraphZoom} \cite{Deng2020GraphZoom}, \textsc{P\&L+C\&S} \cite{huang2020combining}, \textsc{DeeperGCN} \cite{li2020deepergcn}, \textsc{SIGN} \cite{rossi2020sign} and \textsc{GaAN} \cite{zhang2018gaan} from the \textbf{OGB} leaderboard.

\begin{table}[t]
\caption{Test accuracy (in percentage) for citation networks with standard deviation after $\pm$. Compression ratio in \textbf{\textcolor{green!40!black}{(green)}} is the ratio of numbers of nonzero coefficients after and before shrinkage, and is with threshold level $\sigma=1$.}
\label{tab:citation_node}
\footnotesize
\begin{center}
\begin{tabular}{lccc}
\toprule
\textbf{Method} & \textbf{Cora} & \textbf{Citeseer} & \textbf{Pubmed}\\
\midrule
\textsc{MLP} & $55.1$ & $46.5$ & $71.4$\\
\textsc{DeepWalk} & $67.2$ & $43.2$ & $65.3$\\
\textsc{Spectral} & $73.3$ & $58.9$ & $73.9$\\
\textsc{Chebyshev} & $81.2$ & $69.8$ & $74.4$\\
\textsc{GCN} & $81.5$ & $70.3$ & $79.0$\\
\textsc{GWNN} & $82.8$ & \textcolor{black}{$\boldsymbol{71.7}$} & \textcolor{black}{$\boldsymbol{79.1}$}\\
\textsc{GAT} & \textcolor{black}{$\boldsymbol{83.0}${\scriptsize$\pm0.7$}} & \textcolor{violet}{$\boldsymbol{72.5}${\scriptsize$\pm0.7$}} & $79.0${\scriptsize $\pm0.3$}\\
\textsc{MPNN} & $78.0${\scriptsize$\pm1.1$} & $64.0${\scriptsize$\pm1.9$} & $75.6${\scriptsize$\pm1.0$}\\
\textsc{GraphSAGE} & $74.5${\scriptsize$\pm0.8$} & $67.2${\scriptsize$\pm1.0$} & $76.8${\scriptsize$\pm0.6$}\\
\textsc{LanczosNet} & $79.5${\scriptsize$\pm1.8$} & $66.2${\scriptsize$\pm1.9$} & $78.3${\scriptsize$\pm0.3$}\\
\textsc{DCNN} & 79.7{\scriptsize$\pm0.8$} & $69.4${\scriptsize$\pm1.3$} & $76.8${\scriptsize$\pm0.8$}\\
\midrule
\textsc{UFGConv-S} & \textcolor{violet}{$\boldsymbol{83.0}${\scriptsize$\pm0.5$}} & $71.0${\scriptsize$\pm0.6$} & \textcolor{violet}{$\boldsymbol{79.4}${\scriptsize$\pm0.4$}} \\
{\textcolor{green!40!black}{(Compression)}} & \textcolor{green!40!black}{$(\boldsymbol{47.7})$} & \textcolor{green!40!black}{$(\boldsymbol{39.0})$} & \textcolor{green!40!black}{$(\boldsymbol{27.7})$}\\
\textsc{UFGConv-R} & \textcolor{red}{$\boldsymbol{83.6}${\scriptsize$\pm0.6$}} &
\textcolor{red}{$\boldsymbol{72.7}${\scriptsize$\pm0.6$}} &
\textcolor{red}{$\boldsymbol{79.9}${\scriptsize$\pm0.1$}}\\
\bottomrule\\[-2.5mm]
\multicolumn{4}{l}{$\dagger$ The top three are highlighted by \textbf{\textcolor{red}{First}}, \textbf{\textcolor{violet}{Second}}, \textbf{Third}.}
\end{tabular}
\end{center}
\vspace{-6mm}
\end{table}

\begin{table}[t]
\caption{Test accuracy (in percentage) for \textbf{ogbn-arxiv} with standard deviation after $\pm$. The compression ratio for \textsc{UFGConv-S} with shrinkage threshold level $\sigma=1$ is \textcolor{green!40!black}{$\boldsymbol{64.2\%}$}.}
\label{tab:ogbn_arxiv}
\footnotesize
\begin{center}
\begin{tabular}{lccr}
\toprule
\textbf{Method} & \textbf{Test Acc.} & \textbf{Val. Acc.} & \textbf{\#Params}\\
\midrule
\textsc{MLP} & $55.50${\scriptsize $\pm0.23$} & $57.65${\scriptsize $\pm0.12$} & $110,120$\\
\textsc{Node2vec} & $70.07${\scriptsize $\pm0.13$} & $71.29${\scriptsize $\pm0.13$} & $21,818,792$\\
\textsc{GraphZoom} & $71.18${\scriptsize $\pm0.18$} & $72.20${\scriptsize $\pm0.07$} & $8,963,624$\\
\textsc{P\&L + C\&S} & $71.26${\scriptsize $\pm0.01$} & $73.00${\scriptsize $\pm0.01$} & $5,160$\\
\textsc{GraphSAGE} & $71.49${\scriptsize $\pm0.27$} & $72.77${\scriptsize $\pm0.17$} & $218,664$\\
\textsc{GCN} & $71.74${\scriptsize $\pm0.29$} & $73.00${\scriptsize $\pm0.17$} & $142,888$\\
\textsc{DeeperGCN} & $71.92${\scriptsize $\pm0.17$} & $72.62${\scriptsize $\pm0.14$} & $491,176$\\
\textsc{SIGN} & \textcolor{black}{$\boldsymbol{71.95}${\scriptsize $\pm0.11$}} & $73.23${\scriptsize $\pm0.06$} & $3,566,128$\\
\textsc{GaAN} & \textcolor{violet}{$\boldsymbol{71.97}${\scriptsize $\pm0.18$}} & -- & $1,471,506$\\
\midrule
\textsc{UFGConv-S} & $70.04${\scriptsize $\pm0.22$} & $71.04${\scriptsize $\pm0.11$} & $1,633,183$\\
\textsc{UFGConv-R} &\textcolor{red}{$\boldsymbol{71.97}${\scriptsize $\pm0.12$}} & $73.21${\scriptsize $\pm0.05$} & $1,633,183$\\
\bottomrule\\[-2.5mm]
\multicolumn{4}{l}{$\dagger$ The top three are highlighted by \textbf{\textcolor{red}{First}}, \textbf{\textcolor{violet}{Second}}, \textbf{Third}.}\\
\end{tabular}
\end{center}
\vspace{-6mm}
\end{table}

\begin{figure}[t]
  \begin{minipage}{0.48\textwidth}
    \centering
    \includegraphics[width=0.9\linewidth]{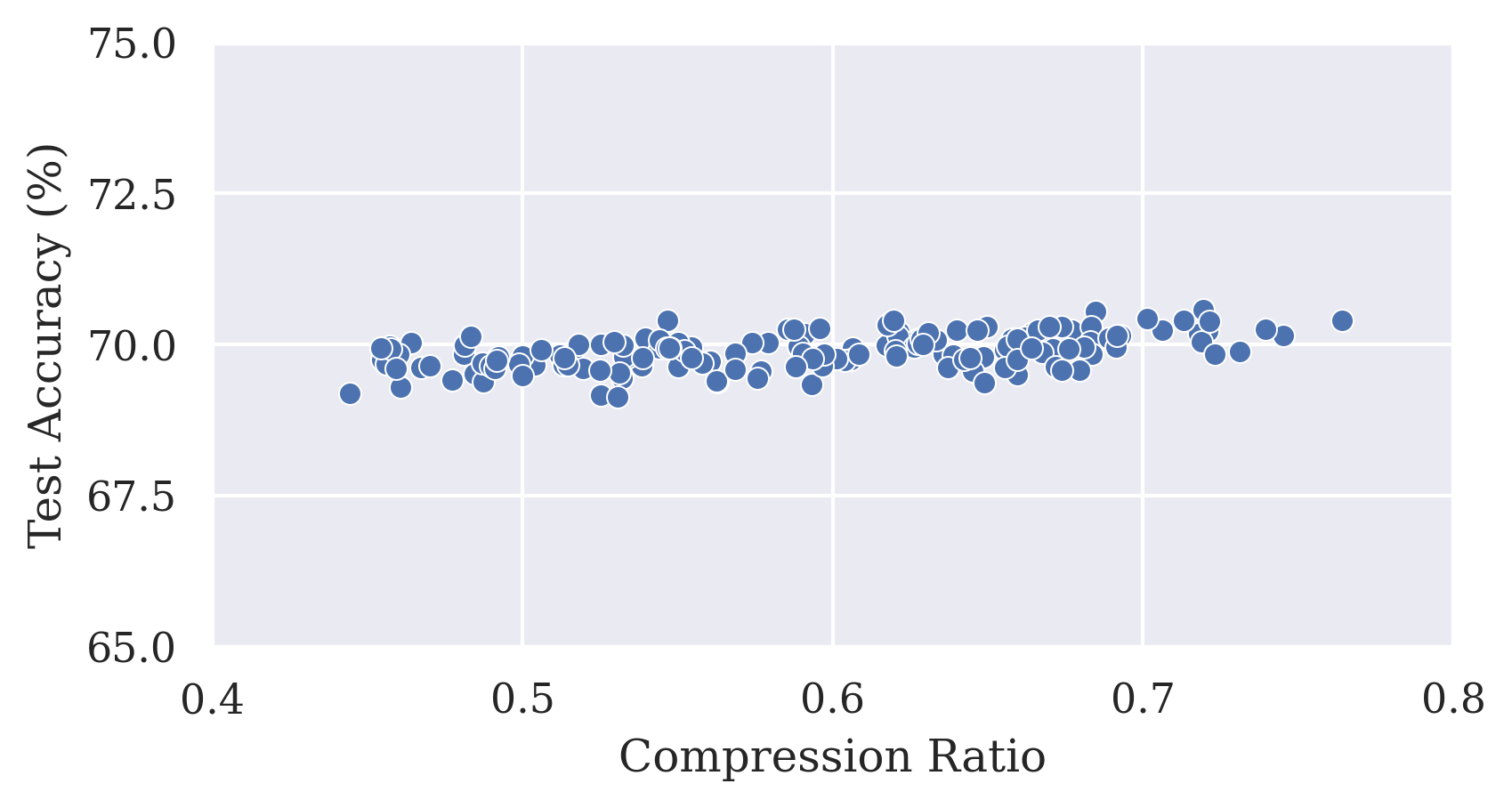}
    \vspace{-4mm}
    \caption{Trade-off between compression ratio and test accuracy in \textsc{UFGConv-S} with \textbf{ogbn-arxiv}.}\label{Fig:tradeoff}
  \end{minipage}\vspace{-4mm}
\end{figure}

\paragraph{Results} 
We report the accuracy score in percentage with the top-3 highlighted in Tables~\ref{tab:citation_node} and~\ref{tab:ogbn_arxiv}. For \textsc{UFGConv-S}, we also report the compression ratio for shrinkage (in green). In Table~\ref{tab:citation_node}, the \textsc{UFGConv-R} method achieves the highest prediction accuracy among all baseline models. The learned \textsc{UFGConv-S} with threshold level $\sigma=1$ trims up to $50\%$ information but still obtains the top-3 rank on two tasks. A similar outstanding performance is reported in Table~\ref{tab:ogbn_arxiv} for the \textbf{ogbn-arxiv} dataset, where the \textsc{UFGConv-R} again ranks first with a moderate number of parameters, and the \textsc{UFGConv-S} with threshold $\sigma=1$ achieves a comparable accuracy at 70\% using 64.2\% information.

We test \textsc{UFGConv-S} with different compression ratios. Ideally, a high test accuracy is preferable to pair with a low compression ratio, and the change in accuracy should be minimal due to the insensitivity of our model to the hyperparameters. However, as shown in Figure~\ref{Fig:tradeoff}, an increasing compression ratio generally results in a slightly higher prediction accuracy as more coefficients for the framelet representation is used by the convolution.

\subsection{Framelet Pooling for Graph Property Prediction}
\label{sec:experiment ufgpool}
The second experiment evaluates two framelet pooling methods, \textsc{UFGPool-Sum} and \textsc{UFGPool-Spectrum}, by ablation studies on graph classification and regression tasks. 

\begin{table*}
\begin{minipage}{0.95\textwidth}
\caption{Performance comparison for graph property prediction. 
\textbf{QM7} is a regression task in MSE; \textbf{ogbg-molhiv} is a classification task in ROC-AUC in percentage; others are for classification in test accuracy in percentage. The value after $\pm$ is standard deviation.}
\label{table:poolingAblation}
\end{minipage}
\footnotesize
\centering
\begin{tabular}{lcccccc}
\toprule
\textbf{Datasets} & \textbf{PROTEINS} & \textbf{Mutagenicity} & \textbf{D\&D} & \textbf{NCI1} & \textbf{ogbg-molhiv} & \textbf{QM7} \\ 
\midrule
\textsc{TopKPool}    & $73.48${\scriptsize $\pm3.57$} & $79.84${\scriptsize $\pm2.46$} & $74.87${\scriptsize $\pm4.12$} & $75.11${\scriptsize $\pm3.45$} & $78.14${\scriptsize $\pm0.62$} & $175.41${\scriptsize $\pm3.16$} \\
\textsc{Attention}   & $73.93${\scriptsize $\pm5.37$} & $80.25${\scriptsize $\pm2.22$} & $77.48${\scriptsize $\pm2.65$} & $74.04${\scriptsize $\pm1.27$} & $74.44${\scriptsize $\pm2.12$} & $177.99${\scriptsize $\pm2.22$} \\
\textsc{SAGPool}     & \textcolor{black}{$\boldsymbol{75.89}${\scriptsize$\pm2.91$}} & $79.86${\scriptsize $\pm2.36$} & $74.96${\scriptsize $\pm3.60$} & $76.30${\scriptsize $\pm1.53$} & $75.26${\scriptsize $\pm2.29$} & \textcolor{black}{$\boldsymbol{41.93}${\scriptsize$\pm1.14$}} \\
\textsc{SUM}         & $74.91${\scriptsize $\pm4.08$} & \textcolor{black}{$\boldsymbol{80.69}${\scriptsize$\pm3.26$}} & \textcolor{black}{$\boldsymbol{78.91}${\scriptsize$\pm3.37$}} & \textcolor{black}{$\boldsymbol{76.96}${\scriptsize$\pm1.70$}} & $77.41${\scriptsize $\pm1.16$} & $42.09${\scriptsize $\pm0.91$}  \\
\textsc{MAX}         & $73.57${\scriptsize $\pm3.94$} & $78.83${\scriptsize $\pm1.70$} & $75.80${\scriptsize $\pm4.11$} & $75.96${\scriptsize $\pm1.82$} & $78.16${\scriptsize $\pm1.33$} & $177.48${\scriptsize $\pm4.70$}  \\
\textsc{MEAN}        & $73.13${\scriptsize $\pm3.18$} & $80.37${\scriptsize $\pm2.44$} & $76.89${\scriptsize $\pm2.23$} & $73.70${\scriptsize $\pm2.55$} & \textcolor{black}{$\boldsymbol{78.21}${\scriptsize$\pm0.90$}} & $177.49${\scriptsize $\pm4.69$}  \\
\midrule
\textsc{UFGPool-Sum} & \textcolor{red}{$\boldsymbol{77.77}${\scriptsize$\pm2.60$}} & \textcolor{violet}{$\boldsymbol{81.59}${\scriptsize$\pm1.40$}} & \textcolor{red}{$\boldsymbol{80.92}${\scriptsize$\pm1.68$}} & \textcolor{red}{$\boldsymbol{77.88}${\scriptsize$\pm1.24$}} & \textcolor{red}{$\boldsymbol{78.80}${\scriptsize $\pm0.56$}} & \textcolor{violet}{$\boldsymbol{41.74}${\scriptsize$\pm0.84$}}  \\ 
\textsc{UFGPool-Spectrum} & \textcolor{violet}{$\boldsymbol{77.23}${\scriptsize$\pm2.40$}} & \textcolor{red}{$\boldsymbol{82.05}${\scriptsize$\pm1.28$}} & \textcolor{violet}{$\boldsymbol{79.83}${\scriptsize$\pm1.88$}} & \textcolor{violet}{$\boldsymbol{77.54}${\scriptsize$\pm2.24$}} & \textcolor{violet}{$\boldsymbol{78.36}${\scriptsize $\pm0.77$}} & \textcolor{red}{$\boldsymbol{41.67}${\scriptsize$\pm0.95$}} \\ 
\bottomrule\\[-2.5mm]
\multicolumn{7}{l}{$\dagger$ The top three are highlighted by \textbf{\textcolor{red}{First}}, \textbf{\textcolor{violet}{Second}}, \textbf{Third}.}
\end{tabular}\vspace{-4mm}
\end{table*}

\paragraph{Dataset} 
We select six benchmarks to test the proposed pooling strategies, including four graph classification tasks with moderate sample sizes, one regression task, and one large-scale classification task. First five tasks use \textbf{TUDataset benchmarks} \cite{morris2020tudataset}, including  \textbf{D\&D} \cite{dobson2003distinguishing,shervashidze2011weisfeiler}, \textbf{PROTEINS} \cite{dobson2003distinguishing,borgwardt2005protein} to categorize proteins into enzyme and non-enzyme structures; \textbf{NCI1} \cite{wale2008comparison} to identify chemical compounds that block lung cancer cells; \textbf{Mutagenicity} \cite{kazius2005derivation,riesen2008iam} to recognize mutagenic molecular compounds for potentially marketable drug; and \textbf{QM7} \cite{blum,rupp} to predict atomization energy value of molecules. The dataset \textbf{ogbg-molhiv} \cite{hu2020open} is for large-scale molecule classification. 

\paragraph{Setup}
The network architecture for all baseline models is fixed to two convolutional layers followed by one pooling layer. The graph convolution for the five \textbf{TUDataset}s uses the \textsc{GCN} model, and for \textbf{ogbg-molhiv} uses \textsc{GIN} with virtual nodes \cite{ishiguro2019graph}. Given graph representations, the prediction is made by a two-layer MLP, where the hidden unit is identical to that of the convolutional layer. The hyperparameters (learning rate, weight decay, number of hidden units in each convolutional layer, and dropout ratio in the readout layer) are fine-tuned with grid search. 

Each dataset is split into training, validation and test sets by $80\%$, $10\%$ and $10\%$. The training stops when the validation loss stops improving for 20 consecutive epochs or reaching maximum $200$ epochs. All results are averaged over $10$ repetitions. The classification tasks report mean test accuracy for \textbf{TUDataset} and ROC-AUC score for \textbf{ogbg-molhiv}. The regression task on \textbf{QM7} reports mean square error (MSE).

\paragraph{Baseline}
We compare our framelet pooling (\textsc{UFGPool-SUM} and \textsc{UFGPool-Spectrum}) with six baseline methods that are capable for global pooling to verify the effectiveness of the learned graph representation. The baselines include \textsc{TopKPool} \cite{gao2019graph,cangea2018towards}, \textsc{AttentionPool} \cite{li2015gated}, \textsc{SAGPool} \cite{lee2019self}, as well as the classic \textsc{Sum}, \textsc{Mean} and \textsc{Max} pooling.

\paragraph{Results}
Table~\ref{table:poolingAblation} summarizes the performance comparison. Our \textsc{UFGPool} methods outperform other methods on all datasets. Specifically, \textsc{UFGPool-Sum} achieves the top accuracy in four out of six datasets, and the second best accuracy in the other two, where the top performance is achieved by \textsc{UFGPool-Spectrum}. We also observe that \textsc{UFGPool-Spectrum} performs better on small molecules prediction: \textbf{Mutagenicity}, \textbf{QM7} and \textbf{ogbg-molhiv}. This precedence might come from encoding the multi-scale signal energy to the network where the framelet spectra capture the practically significant features of molecular data.

\subsection{Sensitivity Analysis}
\label{sec:sensitity}
This section analyses the sensitivity of \textsc{UFGConv-R} and \textsc{UFGConv-S} on the hyperparameters dilation and scale level in the framelet system. The experiment is conducted on \textbf{Cora}, \textbf{Citeseer} and \textbf{Pubmed}. The dilation analysis selects values from $1.25$ to $4$ with step $0.25$, and the scale levels in those three datasets are fixed to $2$, $2$ and $3$ respectively. For the scale level analysis, the values are set from $1$ to $8$ with step $1$, and the dilation stays at default $2$. All other hyperparameters are tuned in the same way as Section~\ref{sec:experiment ufgconv}. We use shrinkage threshold $\sigma=1$ for \textsc{UFGConv-S}.

From Figure~\ref{fig:sensitive}, we can observe that changing dilation or scale level does not drastically impact on the accuracy for either method. In particular, the mean test accuracy is stable over all dilation values and reaches the peak with a small scale level ($2$ for \textbf{Cora} \& \textbf{Citeseer}; $3$ for \textbf{Pubmed}). For scale level $1$, the decreased accuracy is due to the insufficiency of scale and then not salient multiresolution. Thus, we can use dilation $2$ and scale level $2$ in practice, in which the GNN uses multi-scale framelet analysis to achieve supreme performance with a low computational cost.

\begin{figure}
    \centering
    \includegraphics[width=\columnwidth]{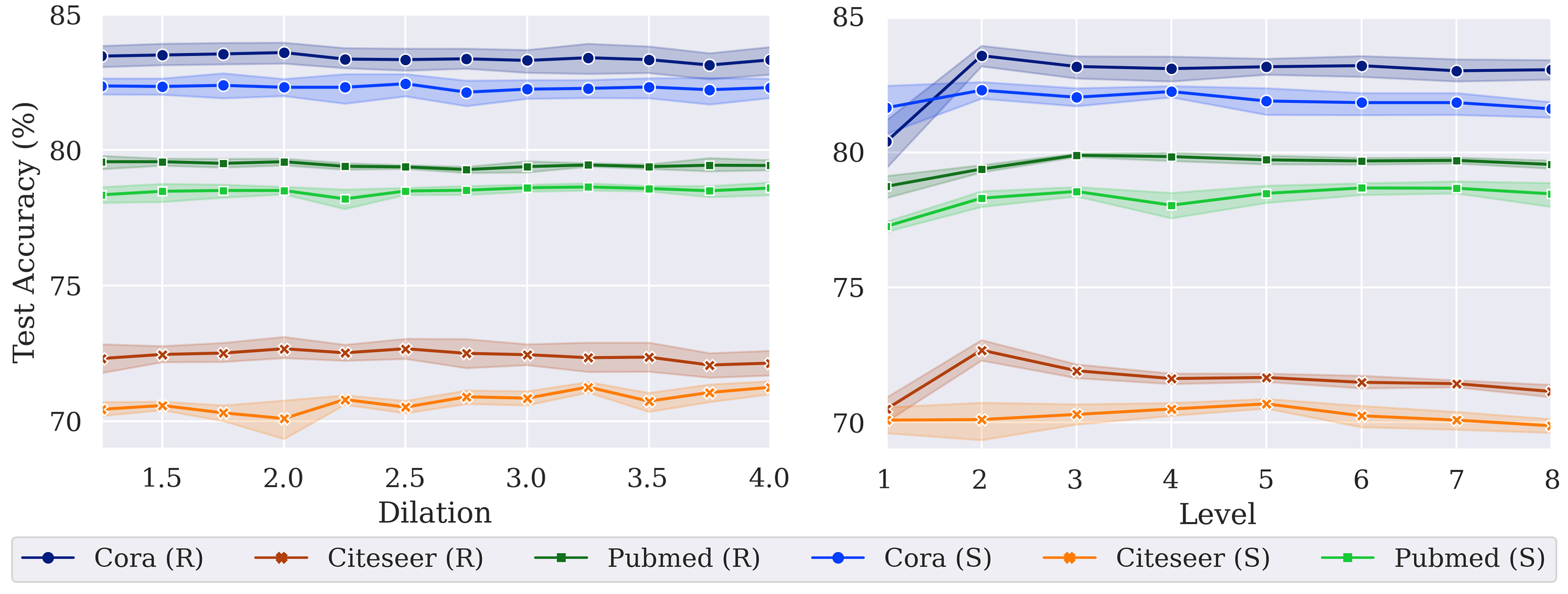}
    \vspace{-7mm}
    \caption{Sensitivity analysis for dilation (left) and scale level (right) with \textsc{UFGConv-R} and \textsc{UFGConv-S} on citation networks.}
    \label{fig:sensitive}\vspace{-6mm}
\end{figure}

\section{Conclusion}
We explore the adaptation of graph framelets for graph neural networks in this paper. As a multi-scale graph representation method, framelet transforms link graph neural networks and signal processing. In many node-level or graph-level tasks, framelet convolutions can reduce both feature and structure noises. We also introduce shrinkage activation that thresholds high-pass coefficients in framelet convolution, which strengthens the network denoising capability and simultaneously compresses graph signal at a remarkable rate. Moreover, we design graph pooling using framelet spectra at low and high passes. The proposed framelet convolutions with both ReLU and shrinkage surpass typical spatial-based and spectral-based graph convolutions on most benchmarks, and the framelet pooling outperforms baselines on a variety of graph property prediction tasks.

\section*{Acknowledgements}
YW and GM have been supported by the European Research Council (ERC) under the European Union’s Horizon 2020 research and innovation programme (grant n\textsuperscript{o} 757983). ML acknowledges supports from the National Natural Science Foundation of China (No. 61802132) and the ``Qianjiang Talent Program D'' of Zhejiang Province. This project was undertaken with the assistance of resources and services from the National Computational Infrastructure (NCI), which is supported by the Australian Government. This research was also undertaken with the assistance of resources and services from the GPU cluster Cobra of Max Planck Society. The authors would like to acknowledge support from the UNSW Resource Allocation Scheme managed by Research Technology Services at UNSW Sydney. The authors also acknowledge the technical assistance provided by the Sydney Informatics Hub, a Core Research Facility of the University of Sydney.

\FloatBarrier
\bibliographystyle{icml2021}
\bibliography{gnn}

\newpage
\onecolumn
\appendix
\section{Undecimated Framelets on Graph}\label{append:ufg}
An undirected and weighted graph $\gph$ is an ordered triple $\gph=(V,\EG,\wG)$ with a finite vertex set $V$, and edge set $\EG\subseteq V\times V$, and a non-negative edge weight function $\wG: \EG\rightarrow\R$. We consider the $l_2$ space on the graph $\gph$, $l_2(\gph)$, with inner product 
\begin{equation*}
    \ipG{f,g}:=\sum_{v\in V} f(v)\conj{g(v)}, \quad f,g\in l_2(\gph),
\end{equation*}
where $\conj{g}$ is the complex conjugate to $g$, and induced norm $\nrmG{f} := \sqrt{\ipG{f,f}}$. Let $N$ be the number of vertices for $\gph$.  

The construction of framelets uses the graph spectrum, and filter bank that is a set of \emph{filters}.
A filter (or mask) is complex-valued sequence on the graph $\mask:=\{\mask_k\}_{k\in\Z}\subseteq \C$ satisfying $|h_k|<\infty$.
The \emph{Fourier series} of a sequence $\{\mask_k\}_{k\in\Z}$ is the $1$-periodic function $\FT{\mask}(\xi):=\sum_{k\in\Z}\mask_k e^{-2\pi i k\xi}$, $\xi\in\R$.
The \emph{scaling functions} $\Psi_j=\{\scala; \scalb^{(1)},\ldots,\scalb^{(n)}\}$ associated with the filter bank $\filtbk:=\{\maska; \maskb[1],\ldots,\maskb[n]\}$ are complex-valued functions on the real axis, which satisfy the equations, for $r=1,\ldots,n$, $\xi\in\Rone$,
\begin{equation}
\label{eq:refinement}
    \FT{\scala}(2\xi) = \FS{\maska}(\xi)\FT{\scala}(\xi),\quad
    \FT{\scalb^{(r)}}(2\xi) = \FS{\maskb}(\xi)\FT{\scala}(\xi).
\end{equation}
Here the $\maska$ is called \emph{low-pass filter} and $\maskb$, $n=1,\ldots,r$ are \emph{high-pass filters}.
Let $\{(\eigfm,\eigvm)\}_{\ell=1}^\NV$ the eigen-pair for the graph Laplacian $\gL$ on $l_2(\gph)$.
For $j\in\Z$ and $\uG\in V$, the \emph{undecimated framelets} $\cfra(g)$ and $\cfrb{r}(g)$, $v\in V$ at scale $j$ are \emph{filtered Bessel kernels} (or summability kernels)
\begin{equation}\label{defn:ufra:ufrb}
\begin{aligned}
    \cfra(g) :=&  \sum_{\ell=1}^{\NV} \FT{\scala}\left(\frac{\eigvm}{2^{j}}\right)\conj{\eigfm(\uG)}\eigfm(v),\\
    \cfrb{r}(g) :=&  \sum_{\ell=1}^{\NV} \FT{\scalb^{(r)}}\left(\frac{\eigvm}{2^{j}}\right)\conj{\eigfm(\uG)}\eigfm(v), \quad r = 1,\ldots,n.
\end{aligned}
\end{equation}
See for example, \citet{Brauchart_etal2015,MaMh2008}.
Here, $j$ and $\uG$ in $\cfra(g)$ and $\cfrb{r}(g)$ are the ``dilation'' at scale $j$ and the ``translation'' at a vertex $\uG\in V$, which have their counterpart in the traditional wavelets on the real axis.
For two integers $J, J_1$ satisfying $J > J_1$, we define an \emph{undecimated framelet system} $\ufrsys[]\left(\Psi,\filtbk;\gph\right)$ (starting from a scale $J_1$) as a non-homogeneous, stationary affine system:   
\begin{equation}\label{defn:UFS}
\begin{aligned}
   \ufrsys[J_1]^{J}(\Psi,\filtbk)
   &:=\ufrsys[J_1]^J(\Psi,\filtbk;\gph) \\
   &:=\{\cfra[J_1,\uG] \setsep \uG\in V\} \cup
   \{\cfrb{r} \setsep \uG\in V, j= J_1,\ldots, J\}_{r = 1}^n.
\end{aligned}
\end{equation}
The system $\ufrsys[J_1]^{J}(\Psi,\filtbk)$ is then called an \emph{undecimated tight frame} for $l_2(\gph)$ and the elements in $\ufrsys[J_1]^{J}(\Psi,\filtbk)$ are called \emph{undecimated tight framelets} on $\gph$.

There are many types of filters. In the experiments, we use the Haar-type filter with one high pass: for $x\in\R$,
\begin{equation*}
    \FT{\maska}(x) = \cos(x/2)\mbox{~~and~~} \FT{\maskb[1]}(x) = \sin(x/2).
\end{equation*}

\section{Equivalence Conditions of Tightness of Framelet System}
The framelet transforms between time domain and framelet domain can attain zero loss. It is due to the \emph{tightness} of the undecimated framelet system. That is, the \eqref{eq:f:UFS0} holds for all $f\in l_2(\gph)$. The tightness of a framelet system is met by an appropriate choice of the filter bank. In our case, the classical filter bank that satisfies the partition of unity would guarantee the tightness of the framelet system, which are equivalent conditions (iv) and (v) in Theorem~\ref{thm:UFS} (see below). The theorem was proved in \citet{zheng2020decimated}. For completeness, we give a brief proof here.
\begin{theorem}[Equivalence of Framelet Tightness]\label{thm:UFS}
Let $\gph=(g,\EG,\wG)$ be a graph and $\{(\eigfm,\eigvm)\}_{\ell=1}^\NV$ a set of orthonormal eigen-pairs for $l_2(\gph)$. Let $\Psi=\{\scala;\scalb^{(1)},\ldots,\scalb^{(n)}\}$ be a set of functions in $L_1(\R)$ associated with a filter bank $\filtbk=\{\maska;\maskb[1],\ldots,\maskb[n]\}$ satisfying \eqref{eq:refinement}. Let integer $J\geq1$, and $\ufrsys[J_1]^{J}(\Psi,\filtbk;\gph), J_1=1,\ldots, J$, be an undecimated framelet system given in \eqref{defn:UFS} with framelets $\cfra$ and $\cfrb{n}$ in \eqref{defn:ufra:ufrb}. Then, the following statements are equivalent.
\begin{itemize}
\item[(i)] For each $J_1=1,\ldots,J$, the undecimated framelet system $\ufrsys[J_1]^{J}(\Psi,\filtbk;\gph)$ is a tight frame for $l_2(\gph)$, that is, $\forall f\in l_2(\gph)$,
    \begin{equation}\label{eq:f:UFS0}
    \nrmG{f}^2 =\sum_{\uG\in V}\Big|\ipG{f,\cfra[J_1,\uG]}\Big|^2
    +\sum_{j=J_1}^{J}\sum_{r=1}^n\sum_{\uG\in V}\Big|\ipG{f,\cfrb{r}}\Big|^2.
    \end{equation}

\item[(ii)]  For all $f\in l_2(\gph)$ and for $j=1,\ldots,J-1$, the following identities hold:
\begin{align}
&   f = \sum_{\uG\in V} \ipG{f,\cfra[J,\uG]}\cfra[J,\uG]
+\sum_{r=1}^n\sum_{\uG\in V}\ipG{f,\cfrb[J,\uG]{r}}\cfrb[J,\uG]{r},
\label{thmeq:normalization1}\\
& \sum_{\uG\in V} \ipG{f,\cfra[j+1,\uG]}\cfra[j+1,\uG]
= \sum_{\uG\in V} \ipG{f,\cfra}\cfra+
\sum_{r=1}^{n}\sum_{\uG\in V} \ipG{f,\cfrb{r}}\cfrb{r}. \label{thmeq:2scale1}
\end{align}

\item[(iii)] For  all $f\in l_2(\gph)$ and for $j=1,\ldots, J-1$, the following identities hold:
\begin{align}
           & \nrmG{f}^2 = \sum_{\uG\in V} \bigl|\ipG{f,\cfra[J,\uG]}\bigr|^{2}
           +\sum_{r=1}^n\sum_{\uG\in V}\bigl|\ipG{f,\cfrb[J,\uG]{r}}\bigr|^{2}, \quad\label{thmeq:normalization2}\\
            & \sum_{\uG\in V} \bigl|\ipG{f,\cfra[j+1,\uG]}\bigr|^{2}
            = \sum_{\uG\in V} \bigl|\ipG{f,\cfra}\bigr|^{2} + \sum_{r=1}^{n}\sum_{\uG\in V} \bigl|\ipG{f,\cfrb{r}}\bigr|^{2}.&\label{thmeq:2scale2}
\end{align}

\item[(iv)] The  functions in $\Psi$ satisfy
\begin{align}
   1 = \left|\FT{\scala}\left(\frac{\eigvm}{2^{J}}\right)\right|^{2} + \sum_{r=1}^{n}\left|\FT{\scalb^{(r)}}\left(\frac{\eigvm}{2^{J}}\right)\right|^{2} &\quad \forall
  \ell=1,\ldots,\NV, \label{thmeq:nrm:alpha:beta}\\
     \left|\FT{\scala}\left(\frac{\eigvm}{2^{j+1}}\right)\right|^{2}
    = \left|\FT{\scala}\left(\frac{\eigvm}{2^{j}}\right)\right|^{2} + \sum_{r=1}^{n}\left|\FT{\scalb^{(r)}}\left(\frac{\eigvm}{2^{j}}\right)\right|^{2} &\quad \forall
 \begin{array}{l}
 \ell=1,\ldots,\NV,\\
 j=1,\ldots,J-1.
 \end{array}\label{thmeq:2scale:alpha:beta}
 \end{align}

\item[(v)] The identities in \eqref{thmeq:nrm:alpha:beta} hold  and  the filters in the filter bank $\filtbk$ satisfy
\begin{align}
  \left|\FS{\maska}\left(\frac{\eigvm}{2^{j}}\right)\right|^{2} + \sum_{n=1}^{r} \left|\FS{\maskb}\left(\frac{\eigvm}{2^{j}}\right)\right|^{2} = 1 \quad \forall \ell\in\sigma_{\scala}^{(j)},\; j = 2,\ldots,J,
\label{thmeq:2scale:masks}
 \end{align}
with
\[
\sigma_{\scala}^{(j)}:=\left\{\ell\in\{1,\ldots,\NV\} \setsep \FT{\scala}\left(\frac{\eigvm}{2^j}\right) \neq 0\right\}.
\]
\end{itemize}
\end{theorem}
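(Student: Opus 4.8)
The plan is to push every one of the five statements into the graph Fourier basis $\{\eigfm\}_{\ell=1}^{\NV}$, where all of the operators involved become diagonal. The single computation that drives everything is this: substituting the explicit framelet formula \eqref{defn:ufra:ufrb} into a frame coefficient and collapsing the resulting double sum over vertices by orthonormality of the eigenvectors, I would obtain
\[
\sum_{\uG\in V}\bigl|\ipG{f,\cfra}\bigr|^{2}=\sum_{\ell=1}^{\NV}\Bigl|\FT{\scala}\bigl(\tfrac{\eigvm}{2^{j}}\bigr)\Bigr|^{2}\bigl|\ipG{f,\eigfm}\bigr|^{2},
\]
together with the analogous identity for $\cfrb{r}$ with $\FT{\scalb^{(r)}}$ in place of $\FT{\scala}$. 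Combined with Parseval, $\nrmG{f}^{2}=\sum_{\ell}\bigl|\ipG{f,\eigfm}\bigr|^{2}$, this lemma rewrites every norm-level statement as a weighted sum of the $\bigl|\ipG{f,\eigfm}\bigr|^{2}$, and the strategy is to use condition (iv) as the hub to which (i), (ii) and (iii) are each shown equivalent, before finally relating (iv) to (v).

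First I would establish (iii) $\Leftrightarrow$ (iv). Applying the lemma to \eqref{thmeq:normalization2} and \eqref{thmeq:2scale2} turns each into an identity of the form $\sum_{\ell}c_{\ell}\bigl|\ipG{f,\eigfm}\bigr|^{2}=0$ required to hold for all $f$; since the vector $\bigl(\ipG{f,\eigfm}\bigr)_{\ell}$ is arbitrary, this forces $c_{\ell}=0$ for every $\ell$, which is exactly \eqref{thmeq:nrm:alpha:beta} and \eqref{thmeq:2scale:alpha:beta}. Running the same reduction on the tight-frame identity \eqref{eq:f:UFS0} gives, for each $\ell$, the single scalar relation $1=\bigl|\FT{\scala}(\eigvm/2^{J_1})\bigr|^{2}+\sum_{j=J_1}^{J}\sum_{r}\bigl|\FT{\scalb^{(r)}}(\eigvm/2^{j})\bigr|^{2}$. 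Writing $A_{j}:=\bigl|\FT{\scala}(\eigvm/2^{j})\bigr|^{2}$ and $B_{j}:=\sum_{r}\bigl|\FT{\scalb^{(r)}}(\eigvm/2^{j})\bigr|^{2}$, this is precisely the telescoping of the two relations $A_{j+1}=A_{j}+B_{j}$ and $A_{J}+B_{J}=1$ of (iv) down to the starting scale $J_1$; conversely, differencing the $J_1$ and $J_1{+}1$ instances of \eqref{eq:f:UFS0} recovers the two-scale relation, so (i) $\Leftrightarrow$ (iv).

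For (ii) I would note that pairing \eqref{thmeq:normalization1} and \eqref{thmeq:2scale1} with $f$ immediately yields the norm identities of (iii), giving (ii) $\Rightarrow$ (iii) for free. For the reverse I prefer to diagonalize rather than invoke an abstract polarization argument: the same orthonormality collapse shows that the frame operator $f\mapsto\sum_{\uG}\ipG{f,\cfra[J,\uG]}\cfra[J,\uG]+\sum_{r,\uG}\ipG{f,\cfrb[J,\uG]{r}}\cfrb[J,\uG]{r}$ acts on the mode $\eigfm$ by multiplication by $A_{J}+B_{J}$. Hence the reconstruction identity \eqref{thmeq:normalization1} (this operator equals the identity) holds iff $A_{J}+B_{J}=1$ for every $\ell$, i.e.\ iff \eqref{thmeq:nrm:alpha:beta}, and \eqref{thmeq:2scale1} is handled the same way. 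This makes (ii) $\Leftrightarrow$ (iv) directly and closes the equivalences among (i)--(iv).

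Finally, for (iv) $\Leftrightarrow$ (v) I would feed the refinement equations \eqref{eq:refinement} into the two-scale relation. Taking $\xi=\eigvm/2^{j+1}$ so that $2\xi=\eigvm/2^{j}$ gives $\FT{\scala}(\eigvm/2^{j})=\FS{\maska}(\eigvm/2^{j+1})\FT{\scala}(\eigvm/2^{j+1})$ and the analogous relation for each $\scalb^{(r)}$; substituting into \eqref{thmeq:2scale:alpha:beta} factors $\bigl|\FT{\scala}(\eigvm/2^{j+1})\bigr|^{2}$ out of the right-hand side, so wherever $\FT{\scala}(\eigvm/2^{j+1})\neq0$ I may cancel it to obtain the filter partition-of-unity \eqref{thmeq:2scale:masks} after reindexing $j+1\mapsto j$ (range $j=2,\dots,J$). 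The step that needs care --- and the one genuine subtlety of the whole proof --- is the complementary index set: where $\FT{\scala}(\eigvm/2^{j+1})=0$ one cannot divide, but the same substitution makes both sides of \eqref{thmeq:2scale:alpha:beta} vanish identically, so no constraint is imposed on the filters there. This is exactly why (v) only asks for the filter identity on the support set $\sigma_{\scala}^{(j)}$, and why the normalization \eqref{thmeq:nrm:alpha:beta}, which survives untouched, is stated verbatim in both (iv) and (v).
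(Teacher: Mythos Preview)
Your proposal is correct and follows essentially the same approach as the paper: both push everything into the orthonormal eigenbasis $\{\eigfm\}$, where the frame sums diagonalize to the scalar quantities $\bigl|\FT{\scala}(\eigvm/2^{j})\bigr|^{2}$ and $\bigl|\FT{\scalb^{(r)}}(\eigvm/2^{j})\bigr|^{2}$, and both handle (iv)$\Leftrightarrow$(v) by substituting the refinement equations and factoring out $\bigl|\FT{\scala}(\eigvm/2^{j+1})\bigr|^{2}$ on the support set $\sigma_{\scala}^{(j)}$. The only differences are organizational: you use (iv) as a hub and diagonalize the frame operator directly to get (ii)$\Leftrightarrow$(iv), whereas the paper chains (i)$\Leftrightarrow$(ii)$\Leftrightarrow$(iii) via projection operators and the polarization identity before linking (ii) to (iv).
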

\begin{proof}
(i)$\Longleftrightarrow$(ii). Let $\cfrspa{j}:=\mathrm{span}\{\cfra: \uG\in V\}$ and $\cfrspb{j}:=\mathrm{span}\{\cfrb{r}: \uG\in V\}$. Define projections  $\cfrpra{j}, \cfrprb{j}$, $r=1,\dots,n$ by
\begin{equation}\label{eqs:proj.ufr}
    \cfrpra{j}(f) := \sum_{\uG\in V} \ipG{f,\cfra}\cfra,\quad
    \cfrprb{j}(f) := \sum_{\uG\in V} \ipG{f,\cfrb{r}}\cfrb{r},\quad f\in l_2(\gph).
\end{equation}
Since $\ufrsys[J_1]^{J}(\Psi,\filtbk)$ is a tight frame for $l_2(\gph)$ for $J_1= 1,\ldots,J$, we obtain by polarization identity,
\begin{equation}\label{eq:f:UFS}
\begin{aligned}
  f = \cfrpra{J_1}(f) + \sum_{j=J_1}^{J}\sum_{r=1}^{n} \cfrprb{j}(f)
    = \cfrpra{J_1+1}(f) + \sum_{j=J_1+1}^{J}\sum_{r=1}^{n} \cfrprb{j}(f)
\end{aligned}
\end{equation}
for all $f\in l_2(\gph)$ and for all $J_1= 1,\ldots,J$.
Thus, for $J_1=1,\ldots,J-1$, 
\begin{equation}\label{eq:ufr.pr.J.J1}
 \cfrpra{J_1+1}(f) =  \cfrpra{J_1}(f) + \sum_{r=1}^{n} \cfrprb{J_1}(f),
\end{equation}
which is \eqref{thmeq:2scale1}. Moreover, when $J_1=J$, \eqref{eq:f:UFS} gives \eqref{thmeq:normalization1}.  Consequently, (i)$\Longrightarrow$(ii).
Conversely,  recursively using \eqref{eq:ufr.pr.J.J1} gives
\begin{equation}\label{eq:cfrpra.m1}
  \cfrpra{m+1}(f) = \cfrpra{J_1}(f) + \sum_{j=J_1}^{m}\sum_{r=1}^{n} \cfrprb{j}(f)
\end{equation}
for all $J_1\le m\le J-1$. Taking $m=J-1$ together with \eqref{thmeq:normalization1}, we deduce \eqref{eq:f:UFS},
which is equivalent to \eqref{eq:f:UFS0}. Thus, (ii)$\Longrightarrow$(i).

(ii)$\Longleftrightarrow$(iii). The equivalence between (ii) and (iii) simply follows from the polarization identity.

(ii)$\Longleftrightarrow$(iv). By the orthonormality of $\eigfm$,
\begin{equation*}\label{eq:cfr.coeff}
    \ipG{f,\cfra} = \sum_{\ell=1}^{\NV} \conj{\FT{\scala}\left(\frac{\eigvm}{2^{j}}\right)}\Fcoem{f}\:\eigfm(\uG), \quad
    \ipG{f,\cfrb{r}} = \sum_{\ell=1}^{\NV} \conj{\FT{\scalb^{(r)}}\left(\frac{\eigvm}{2^{j}}\right)}\Fcoem{f}\:\eigfm(\uG),
\end{equation*}
where $\FT{f}_\ell=\ipG{f,\eigfm}$ is the Fourier coefficient of $f$ with respect to $\eigfm$.
This together with \eqref{eqs:proj.ufr} and \eqref{defn:ufra:ufrb} gives, for $j\ge 1$ and $r=1,\dots,n$, the Fourier coefficients for the projections $\cfrpra{j}(f)$ and $\cfrprb{j}(f)$:
\begin{equation}\label{eq:Fcoe.cfrpr}
    \Fcoem{\left(\cfrpra{j}(f)\right)}
    = \left|\FT{\scala}\left(\frac{\eigvm}{2^{j}}\right)\right|^{2} \Fcoem{f},\quad
    \Fcoem{\left(\cfrprb{j}(f)\right)}
    = \left|\FT{\scalb^{(r)}}\left(\frac{\eigvm}{2^{j}}\right)\right|^{2} \Fcoem{f},\quad  \forall\ell=1,\ldots,\NV,
\end{equation}
which implies that \eqref{thmeq:normalization1} and \eqref{thmeq:2scale1} are equivalent to \eqref{thmeq:nrm:alpha:beta} and \eqref{thmeq:2scale:alpha:beta} respectively.
Thus, (ii)$\Longleftrightarrow$(iv).

(iv)$\Longleftrightarrow$(v).  By the relation in \eqref{eq:refinement}, it can be deduced that for $\ell=1,\dots,N$ and $j\ge 1$,
\begin{align*}
     \left|\FT{\scala}\left(\frac{\eigvm}{2^{j}}\right)\right|^{2} + \sum_{r=1}^{n}\left|\FT{\scalb^{(r)}}\left(\frac{\eigvm}{2^{j}}\right)\right|^{2}
    = \left(\left|\FT{\maska}\left(\frac{\eigvm}{2^{j+1}}\right)\right|^{2} + \sum_{r=1}^{n}\left|\FT{\maskb}\left(\frac{\eigvm}{2^{j+1}}\right)\right|^{2}\right)\left|\FT{\scala}\left(\frac{\eigvm}{2^{j+1}}\right)\right|^{2}.
\end{align*}
This shows that \eqref{thmeq:2scale:alpha:beta} is equivalent to \eqref{thmeq:2scale:masks}. Therefore, (iv)$\Longleftrightarrow$(v).
\end{proof}

\section{Framelet Transforms}
With framelet system $\ufrsys[J_1]^{J}(\Psi,\filtbk;\gph), J_1=1,\ldots, J$ introduced in Section~\ref{append:ufg}, we define the \emph{framelet coefficients} for a function $f$ on the graph $\gph$ by
\begin{equation}\label{eq:framelet coeff}
    \fracoev[0] = \bigl\{\ipG{f,\cfra[0,\uG]}\bigr\}_{p\in V}\mbox{~~and~~} \frbcoev{r} = \bigl\{\ipG{f,\cfrb[j,\uG]{r}}\bigr\}_{p\in V}, \quad j=0,\dots,J,\;\; r=1,\dots,n,
\end{equation}
where $\fracoev[0]$ and $\frbcoev{n}$ are the \emph{low-pass} and \emph{high-pass} coefficients for $f$. The \emph{framelet transforms} are the mapping between the graph signal $f$ and its framelet coefficients $\{\fracoev[0];\frbcoev[0]{n},\dots,\frbcoev[J]{n}\}$. The \eqref{eq:framelet coeff} can be written as the matrix-vector form, as follows.
For the eigenpair $\{(\eigvm,\eigfm)\}_{\ell=1}^{N}$ for the graph Laplacian $\gL$,
let $U = [\eigfm[1],\dots,\eigfm[N]]$ be the square matrix (of size $N\times N$) of eigenvectors, and $\Lambda = {\rm diag}(\eigvm[1],\dots,\eigvm[N])$ be the diagonal of eigenvalues. We then have
\begin{equation*}
    \FT{\scala}\left(\frac{\Lambda}{2^{j+1}}\right)={\rm diag}\left(\FT{\scala}\left(\frac{\eigvm[1]}{2^{j+1}}\right),\dots,\FT{\scala}\left(\frac{\eigvm[N]}{2^{j+1}}\right)\right),\quad 
    \FT{\scalb^{(r)}}\left(\frac{\Lambda}{2^{j+1}}\right)={\rm diag}\left(\FT{\scalb^{(r)}}\left(\frac{\eigvm[1]}{2^{j+1}}\right),\dots,\FT{\scalb^{(r)}}\left(\frac{\eigvm[N]}{2^{j+1}}\right)\right)
\end{equation*}
the filtered diagonal matrices with low-pass and high-pass filters. Then, the framelet coefficients have the following representation.
\begin{proposition} With the notation given above,
\begin{equation}\label{eq:coeff mat}
    \fracoev[0] = U \FT{\scala}\left(\frac{\Lambda}{2}\right) U^{^{\top}}f\mbox{~~and~~}
     \frbcoev{r} = U \FT{\scalb^{(r)}}\left(\frac{\Lambda}{2^{j+1}}\right) U^{^{\top}}f\quad\forall j=0,\dots,J,\; r=1,\dots,n.
\end{equation}
\end{proposition}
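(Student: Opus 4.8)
The plan is to establish the matrix identity \eqref{eq:coeff mat} by a componentwise verification: I read off the $\uG$-th entry of each claimed matrix--vector product and match it against the scalar definition of the framelet coefficient $\ipG{f,\cfra[0,\uG]}$ (respectively $\ipG{f,\cfrb[j,\uG]{r}}$) in \eqref{eq:framelet coeff}. Because $\gph$ has finitely many vertices, every sum involved is finite and no convergence issue arises, so the whole argument is purely algebraic rather than analytic.

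First I would expand the defining inner product using the spectral formula \eqref{defn:ufra:ufrb}. Writing $\ipG{f,\cfra}=\sum_{v\in V} f(v)\conj{\cfra(v)}$ and substituting $\cfra(v)=\sum_{\ell}\FT{\scala}(\eigvm/2^{j})\conj{\eigfm(\uG)}\eigfm(v)$, I exchange the two finite sums and factor out the quantities independent of $v$. The remaining inner sum $\sum_{v} f(v)\conj{\eigfm(v)}$ is exactly the graph Fourier coefficient $\Fcoem{f}=\ipG{f,\eigfm}$, i.e. the $\ell$-th entry of $U^{\top}f$. This manipulation is in fact already carried out inside the proof of Theorem~\ref{thm:UFS}, where the same steps yield $\ipG{f,\cfra}=\sum_{\ell}\conj{\FT{\scala}(\eigvm/2^{j})}\,\Fcoem{f}\,\eigfm(\uG)$ together with the analogous expression in $\scalb^{(r)}$; I would simply invoke that identity rather than redo the bookkeeping.

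The remaining step is to recognise this scalar identity as a triple matrix product read right to left: $U^{\top}f$ produces the vector of Fourier coefficients $(\Fcoem{f})_{\ell}$, left-multiplication by the diagonal matrix $\FT{\scala}(\Lambda/2^{j})$ rescales the $\ell$-th coordinate by $\FT{\scala}(\eigvm/2^{j})$, and left-multiplication by $U$ synthesises the result back to the vertex domain, its $\uG$-th entry being $\sum_{\ell}\eigfm(\uG)\FT{\scala}(\eigvm/2^{j})\Fcoem{f}$. This matches the scalar formula entrywise, giving $\fracoev[j]=U\FT{\scala}(\Lambda/2^{j})U^{\top}f$, and by the identical computation with $\scalb^{(r)}$ in place of $\scala$ the high-pass identity $\frbcoev{r}=U\FT{\scalb^{(r)}}(\Lambda/2^{j})U^{\top}f$.

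The two points needing care are notational rather than conceptual. First, the conjugate $\conj{\FT{\scala}(\eigvm/2^{j})}$ appearing in the expansion must be dropped to reach the stated form, which uses $\FT{\scala}$ and $U^{\top}$ in place of $U^{*}$; this is legitimate because the Haar-type filters of Appendix~\ref{append:ufg} are real-valued at the real arguments $\eigvm/2^{j}$ and the Laplacian $\gL$ admits a real orthonormal eigenbasis. Second, the scale indices must be reconciled with the dilation normalisation adopted for the coefficients in \eqref{eq:framelet coeff}, which is why the stated low-pass factor is $\FT{\scala}(\Lambda/2)$ and the high-pass factors are $\FT{\scalb^{(r)}}(\Lambda/2^{j+1})$. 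Tracking this index shift consistently is the only genuine obstacle, and it is pure bookkeeping: once the convention is fixed there is nothing further to prove.
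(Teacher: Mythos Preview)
Your proposal is correct and is exactly the natural componentwise verification; the paper in fact states this proposition without proof, treating it as an elementary spectral reformulation, and the scalar identity you invoke from the proof of Theorem~\ref{thm:UFS} is all that is needed. Your remarks on real-valuedness of the Haar filters and of the Laplacian eigenbasis, and on the index shift $2^{j}$ versus $2^{j+1}$, address the only points that require care.
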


\subsection{Decomposition and Reconstruction}
We call estimating coefficients from $f$ \emph{framelet decomposition}, and its inverse \emph{framelet reconstruction}. 
In our construction for the framelets and framelet transforms, the framelet decomposition and reconstruction are \emph{invertible}.
The decomposition and reconstruction can be achieved efficiently via a \emph{filter bank} $\filtbk=\{\maska; \maskb[1],\ldots,\maskb[n]\}$. We use the filter bank of Haar-type in \citet{dong2017sparse}, by which the decomposition and reconstruction can be implemented recursively in a fast algorithm. 
By repeated use of the refinement equation \eqref{eq:refinement} of the filter, we arrive at the following transforms, which pave the way for efficiently computing framelet coefficients in \eqref{eq:coeff mat}. For $r=1,\dots,n$ and $j=1,\dots,J$, we define operators for $(r,j)\in \{(1,1),\ldots,(1,J),\ldots(n,1),\ldots,(n,J)\}\cup \{(0,J)\}$ by, for $f\in l_2(\gph)$,
\begin{equation}\label{eq:Wflat}
    \begin{aligned}
    \analOp_{r,1}^{\flat}f &= U \FT{\maskb}\left(2^{-K}\Lambda\right)U^{\top}f, \\
    \analOp_{r,j}^{\flat}f &= U \FT{\maskb}\left(2^{K+j-1}\Lambda\right)
    \FT{\maska}\left(2^{K+j-2}\Lambda\right)\dots\FT{\maska}\left(2^{-K}\Lambda\right)U^{\top}f,\quad\forall j\geq2,
\end{aligned}
\end{equation}
where as mentioned before, $K$ is the real value such that the graph Laplacian's biggest eigenvalue $\eigvm[{\rm max}]\leq 2^K\pi$.
With the transforms $\analOp_{r,j}^{\flat}$ in \eqref{eq:Wflat}, we can write the the decomposition and reconstruction explicitly, as follows. 
\begin{theorem}[Framelet Decomposition and Reconstruction]\label{thm:decomp reconstr} 
The framelet decomposition can be achieved via filter bank $\filtbk$ recursively: for $r=1,\dots,n$ and $j=1,\dots,J$, 
\begin{equation}\label{eq:framelet decomp}
    \fracoev[0] = \analOp_{0,J}^{\flat}f
    \mbox{~~and~~} 
    \frbcoev{r} = \analOp_{r,j}^{\flat}f.
\end{equation}
The reconstruction for a set of coefficients $\{\fracoev[0]\}\cup\{\frbcoev{1},\dots,\frbcoev{n}\}_{j=1}^{J}$ on $\gph$ can be computed by
\begin{equation}\label{eq:framelet reconstr}
    f_J = \analOp_{0,J}^{\flat,\star}\:\fracoev[0] + \sum_{j=1}^{J}\sum_{r=1}^{n}\analOp_{r,j}^{\flat,\star}\:\frbcoev{r},
\end{equation}
where the $\star$ indicates the conjugate transpose of the associated matrix.
The decomposition and reconstruction in \eqref{eq:framelet decomp} and \eqref{eq:framelet reconstr} are invertible, that is, $f_J=f$.
\end{theorem}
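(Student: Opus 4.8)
The plan is to prove the three claims in sequence: first the decomposition formula \eqref{eq:framelet decomp}, then the behaviour of the reconstruction operator \eqref{eq:framelet reconstr}, and finally invertibility $f_J=f$, reducing each step to an algebraic identity on functions of the graph Laplacian that are simultaneously diagonalized by $U$.

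For the decomposition, I would start from the closed form \eqref{eq:coeff mat}, in which every coefficient block is written as $U\,g(\Lambda)\,U^{\top}f$ for a scaling function $g\in\{\FT{\scala},\FT{\scalb^{(r)}}\}$ evaluated at a dilated Laplacian. The key is the refinement equation \eqref{eq:refinement}, which lets me factor $\FT{\scala}(2\xi)=\FS{\maska}(\xi)\FT{\scala}(\xi)$ and $\FT{\scalb^{(r)}}(2\xi)=\FS{\maskb}(\xi)\FT{\scala}(\xi)$. Applying this repeatedly telescopes the scaling function at the target dilation into a product of filter Fourier-series at the intervening dilations, multiplied by the scaling function at the base dilation $2^{-K}\Lambda$. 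Because all factors are functions of the same $\Lambda$, they share the eigenbasis $U$ and the matrix products combine as $U\,a(\Lambda)\,U^{\top}\cdot U\,b(\Lambda)\,U^{\top}=U\,a(\Lambda)b(\Lambda)\,U^{\top}$; this is exactly the structure of the operators $\analOp_{r,j}^{\flat}$ in \eqref{eq:Wflat}. The role of $K$ (chosen so that $\eigvm[{\rm max}]\le 2^{K}\pi$) is to guarantee that the coarsest argument lands where $\FT{\scala}$ acts as the identity on the active part of the spectrum, so the recursion terminates and the filter products match \eqref{eq:Wflat} term by term.

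For the reconstruction and invertibility, I would substitute \eqref{eq:framelet decomp} into the right-hand side of \eqref{eq:framelet reconstr} and use that $\analOp_{r,j}^{\flat,\star}=U\,\overline{g(\Lambda)}\,U^{\top}$, since $U$ is unitary and $g(\Lambda)$ is diagonal. Collecting terms yields $f_J = U\,M(\Lambda)\,U^{\top}f$, where $M(\Lambda)$ is the diagonal operator with entries $|\FT{\scala}(\eigvm/2^{J})|^{2}+\sum_{j=1}^{J}\sum_{r=1}^{n}|\FT{\scalb^{(r)}}(\eigvm/2^{j})|^{2}$ after re-expanding the filter products back into scaling functions via the same refinement identity used above. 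The remaining task is to show $M(\Lambda)=I$, i.e. that this sum equals $1$ on every eigenvalue $\eigvm$. This is precisely the tightness/partition-of-unity condition \eqref{thmeq:nrm:alpha:beta}–\eqref{thmeq:2scale:alpha:beta} established in Theorem~\ref{thm:UFS}(iv): the two-scale identity \eqref{thmeq:2scale:alpha:beta} telescopes the high-pass contributions across scales $j=1,\dots,J-1$ into a single coarse low-pass term, and the normalization \eqref{thmeq:nrm:alpha:beta} closes the telescope to $1$. Hence $M(\Lambda)=I$ and $f_J=f$, which delivers invertibility.

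The main obstacle I anticipate is bookkeeping rather than conceptual: correctly tracking the scale indices and the base-scale shift by $K$ through the telescoping so that the filter products in \eqref{eq:Wflat} align exactly with the scaling-function evaluations in \eqref{eq:coeff mat}, and verifying that $\FT{\scala}$ at the coarsest dilation really is the identity on the active part of the spectrum. Once the factorization is arranged so that every operator is a function of $\Lambda$, the argument collapses onto the scalar telescoping identity already proved in Theorem~\ref{thm:UFS}, so the genuine mathematical content is inherited from the tightness equivalence rather than reproved.
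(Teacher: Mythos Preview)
Your proposal is correct and follows essentially the same route as the paper: establish \eqref{eq:framelet decomp} by telescoping the refinement equation \eqref{eq:refinement} so that the filter products in \eqref{eq:Wflat} match the scaling-function formulas in \eqref{eq:coeff mat}, and then deduce $f_J=f$ from the tightness in Theorem~\ref{thm:UFS}. The only cosmetic difference is that the paper invokes the equivalence (v)$\Leftrightarrow$(ii) of Theorem~\ref{thm:UFS} (so that the reconstruction identity for the projections $\cfrpra{j},\cfrprb{j}$ gives $f_J=f$ directly), whereas you work through condition~(iv) by explicitly diagonalizing in $U$ and telescoping the scalar sum $M(\Lambda)$ to the identity; since Theorem~\ref{thm:UFS} already proves these conditions equivalent, the two arguments have identical content.
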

\begin{proof} We prove the invertibility between \eqref{eq:framelet decomp} and \eqref{eq:framelet reconstr}. As mentioned above, the framelet coefficients in \eqref{eq:framelet decomp} are equivalent with \eqref{eq:coeff mat} and then with the original definition in \eqref{eq:framelet coeff}, due to that the scaling and filter functions satisfy the refinement equation \eqref{eq:refinement}. 
By the equivalence between Theorem~\ref{thm:UFS}(v) and (ii), using the framelet coefficients in \eqref{eq:framelet decomp} for \eqref{eq:framelet reconstr}, we thus obtain $f_J=f$.
\end{proof}

Note that we count the level index $j$ in \eqref{eq:tensor reconstr} from $1$, which is in essence the same as \eqref{eq:f:UFS0} of Theorem~\ref{eq:framelet reconstr}, and Theorem~\ref{thm:UFS} below.

\subsection{Tensor-based $\gph$-framelet Transforms}
Due to the computational difficulty of eigendecomposition for the large-scale graph Laplacian matrix, the decomposition and reconstruction in Theorem~\ref{thm:decomp reconstr} cannot be directly computed in an efficient way.
To fast evaluate them, we apply the approximation by Chebyshev polynomials $\mathcal{T}_{0},\dots,\mathcal{T}_{n}$ of a fixed degree $t$, for the filter $\maska\approx \mathcal{T}_{0}$ and $\maskb\approx \mathcal{T}_{r}$. Here $t$ is a sufficiently large integer such that the Chebyshev polynomial approximation is of high precision. 
Then, the \eqref{eq:Wflat} can be approximated by
\begin{equation}\label{eq:Wflat 1}
    \begin{aligned}
    \analOp_{r,1}^{\flat}f &= U \FT{\maskb}\left(2^{-K}\Lambda\right)U^{\top}f
    \approx U \mathcal{T}_{r}\left(2^{-K}\Lambda\right)U^{\top}f, \\
    \analOp_{r,j}^{\flat}f &= U \FT{\maskb}\left(2^{K+j-1}\Lambda\right)
    \FT{\maska}\left(2^{K+j-2}\Lambda\right)\dots\FT{\maska}\left(2^{-K}\Lambda\right)U^{\top}f\\
    &\approx U \mathcal{T}_{r}\left(2^{K+j-1}\Lambda\right)
    \mathcal{T}_{0}\left(2^{K+j-2}\Lambda\right)\dots\mathcal{T}_{0}\left(2^{-K}\Lambda\right)U^{\top}f,
    \quad\forall j\geq2.
\end{aligned}
\end{equation}
By the property of the polynomial of matrix and the eigendecomposition of the graph Laplacian $U\Lambda U^{\top} = \gL$, the \eqref{eq:Wflat 1} then becomes
\begin{equation}
    \begin{aligned}
    \analOp_{r,1}^{\flat}f 
    &\approx \mathcal{T}_{r}\left(2^{-K}\gL\right)f
    =:\analOp_{r,1}f, \\
    \analOp_{r,j}^{\flat}f 
    &\approx \mathcal{T}_{r}\left(2^{K+j-1}\gL\right)
    \mathcal{T}_{0}\left(2^{K+j-2}\gL\right)\dots\mathcal{T}_{0}\left(2^{-K}\gL\right)f =: \analOp_{r,j}f,
    \quad\forall j\geq2.
\end{aligned}
\end{equation}
With filters given, we can pre-compute the coefficients for the Chebyshev polynomial approximation up to degree $t$ by a quadrature rule. The framelet decomposition and reconstruction can be evaluated by the approximate transforms $\analOp_{r,j}$ analogously to Theorem~\ref{thm:decomp reconstr}, as follows.
\begin{theorem}[Framelet Transforms by Chebyshev Polynomial Approximation]\label{thm:tensor decomp reconstr} 
The framelet decomposition and reconstruction can be approximated by: for $r=1,\dots,n$ and $j=1,\dots,J$,
\begin{equation*}
    \fracoev[0] = \analOp_{0,J}f\mbox{~~and~~} \frbcoev{r} = \analOp_{r,j}f.
\end{equation*}
The reconstruction for a set of coefficients $\{\fracoev[0]\}\cup\{\frbcoev{1},\dots,\frbcoev{n}\}_{j=1}^{J}$ on $\gph$ can be evaluated by
\begin{equation}\label{eq:tensor reconstr}
    f_J = \analOp_{0,J}^{\star}\:\fracoev[0] + \sum_{r=1}^{n}\sum_{j=1}^{J}\analOp_{r,j}^{\star}\:\frbcoev{r},
\end{equation}
where the $\star$ indicates the conjugate transpose of the associated matrix.
\end{theorem}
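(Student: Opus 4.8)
The plan is to reduce the statement to the exact version already established in Theorem~\ref{thm:decomp reconstr}, treating the Chebyshev step purely as a controlled perturbation of the filters. The only difference between the exact transforms $\analOp_{r,j}^{\flat}$ and the tensorized transforms $\analOp_{r,j}$ is that the spectral filter matrices $\FT{\maska}(\cdot)$ and $\FT{\maskb}(\cdot)$ appearing in \eqref{eq:Wflat 1} are replaced by the Chebyshev surrogates $\mathcal{T}_0(\cdot)$ and $\mathcal{T}_r(\cdot)$. I would therefore organize the argument around tracking this single substitution through both the decomposition and reconstruction formulas.

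First I would record the algebraic identity that removes the eigendecomposition: for any polynomial $\mathcal{T}$ and the symmetric Laplacian $\gL = U\Lambda U^{\top}$ with orthonormal $U$, one has $\mathcal{T}(\gL) = U\,\mathcal{T}(\Lambda)\,U^{\top}$. Substituting this into the product form of \eqref{eq:Wflat 1} and cancelling the interior factors $U^{\top}U = I$ collapses each transform into a product of polynomials of $\gL$, which is exactly the passage $\analOp_{r,j}^{\flat}f \approx \analOp_{r,j}f$ stated before the theorem. Combined with the coefficient representation \eqref{eq:coeff mat} and the exact decomposition of Theorem~\ref{thm:decomp reconstr}, this gives $\fracoev[0] = \analOp_{0,J}f$ and $\frbcoev{r} = \analOp_{r,j}f$ up to the Chebyshev error, establishing the decomposition half. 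I would also note here that each $\analOp_{r,j}$ is built from polynomials of the real symmetric $\gL$, so $\analOp_{r,j}^{\star}$ is the honest transpose and the frame operator $\sum \analOp_{r,j}^{\star}\analOp_{r,j}$ is symmetric positive semidefinite, the quantity that must equal the identity for \eqref{eq:tensor reconstr} to return $f$.

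For the reconstruction, I would reuse the equivalence chain of Theorem~\ref{thm:UFS}. The identity \eqref{eq:tensor reconstr} is the assertion that $\{\analOp_{r,j}\}$ forms a tight frame, which via Theorem~\ref{thm:UFS}(ii)$\Longleftrightarrow$(v) is equivalent to the partition-of-unity condition \eqref{thmeq:2scale:masks}, now imposed on the polynomials $\mathcal{T}_0,\mathcal{T}_r$ in place of $\FT{\maska},\FT{\maskb}$. Since the spectrum $\{\eigvm/2^j\}$ lies in a fixed compact interval (because $\eigvm[{\rm max}]\leq 2^K\pi$) on which $\mathcal{T}_0,\mathcal{T}_r$ approximate $\FT{\maska},\FT{\maskb}$ uniformly, the partition of unity holds up to an error governed by the sup-norm approximation error, and \eqref{eq:tensor reconstr} recovers $f$ in the same way that \eqref{eq:framelet reconstr} does in the exact setting.

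The hard part will be the quantitative control of the residual: the Haar-type filters $\cos(x/2),\sin(x/2)$ satisfy partition of unity exactly, but their degree-$t$ Chebyshev truncations do not, so \eqref{eq:tensor reconstr} yields $f_J = f$ only in the limit $t\to\infty$ and otherwise carries an error $\nrmG{f_J - f}$ bounded by the uniform Chebyshev error over the scaled spectrum. I would make this precise by propagating the per-filter error through the $J$-fold product, using that each $\analOp_{r,j}$ has operator norm of order one on the compact spectral interval, so the accumulated error is at most of order $nJ$ times the single-filter approximation error. Choosing the degree $t$ large enough, as the statement preceding the theorem assumes, renders this negligible and yields the stated formulas.
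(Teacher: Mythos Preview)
Your approach is correct and aligns with the paper, but it is worth noting that the paper supplies \emph{no explicit proof} for this theorem at all: it is stated as a direct consequence of the Chebyshev substitution discussed in the paragraph preceding it (the passage from \eqref{eq:Wflat} to \eqref{eq:Wflat 1} via $\mathcal{T}(\gL)=U\,\mathcal{T}(\Lambda)\,U^{\top}$), together with Theorem~\ref{thm:decomp reconstr}. Your first two paragraphs are exactly this argument, spelled out with the intermediate cancellations made explicit.

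Where you go beyond the paper is in the reconstruction half and the error discussion. The paper simply writes \eqref{eq:tensor reconstr} as the analogue of \eqref{eq:framelet reconstr} and leaves it at that; it never revisits the partition-of-unity condition \eqref{thmeq:2scale:masks} for the Chebyshev surrogates, nor does it quantify the discrepancy $\nrmG{f_J-f}$. Your observation that tightness holds only approximately (since $|\mathcal{T}_0|^2+\sum_r|\mathcal{T}_r|^2$ need not equal $1$ exactly) and your propagation of the sup-norm error through the $J$-fold product are both more than the paper attempts. This is a genuine addition rather than a different route: the paper treats ``approximated by'' as the end of the story, whereas you make precise what that approximation costs. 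Both readings are consistent with the theorem as stated, which asserts only an approximation and not an exact identity.
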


As mentioned, with proper alignment of $\analOp_{r,j}$, we have a tensor-based evaluation for framelet transforms. Define the matrix (of size $(nJ+1)N\times N$)
\begin{equation}
    \analOp^{\natural} = \bigl[\analOp_{0,J},\analOp_{1,J},
    \dots,\analOp_{n,J},\dots,\analOp_{1,J},
    \dots,\analOp_{n,J}\bigr]^{\top}.
\end{equation}
Let $c$ be the concatenation of coefficients $\fracoev[0],\frbcoev[1]{1},\ldots,\frbcoev[1]{n},\ldots,\frbcoev[J]{1},\ldots,\frbcoev[J]{n}$ associated with a framelet system with $n$ high passes and graph with $N$ nodes. Then, $c$ is a column vector of length $(nJ+1)N$. The following corollary gives the decomposition and reconstruction for the tensor-based $\gph$-framelet transforms, as used in the main part and experiments.
\begin{corollary} 
The coefficients from framelet decomposition up to scale level $J$ is given by
\begin{equation*}
    c= \analOp^{\natural} f
\end{equation*}
and the framelet reconstruction is given by
\begin{equation*}
    f_J = (\analOp^{\natural})^{\top}c.
\end{equation*}
\end{corollary}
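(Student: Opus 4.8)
The plan is to observe that the Corollary is a near-immediate bookkeeping consequence of Theorem~\ref{thm:tensor decomp reconstr}, so the entire argument reduces to verifying that the vertical stacking operation defining $\analOp^{\natural}$ correctly encodes the individual operators $\analOp_{r,j}$, and that the transpose of this stacked matrix reproduces the reconstruction sum in \eqref{eq:tensor reconstr}. First I would unpack the block structure: since $\analOp^{\natural}=[\analOp_{0,J},\analOp_{1,J},\dots,\analOp_{n,J},\dots,\analOp_{1,J},\dots,\analOp_{n,J}]^{\top}$ stacks the $nJ+1$ matrices (each $N\times N$) into one $(nJ+1)N\times N$ matrix, applying it to $f$ yields a column vector whose consecutive length-$N$ blocks are exactly $\analOp_{0,J}f,\analOp_{1,1}f,\dots,\analOp_{n,J}f$. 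By Theorem~\ref{thm:tensor decomp reconstr}, these blocks are precisely $\fracoev[0],\frbcoev[1]{1},\dots,\frbcoev[J]{n}$, which is the definition of the concatenated coefficient vector $c$. Hence $c=\analOp^{\natural}f$, establishing the decomposition half.

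For the reconstruction half, I would compute $(\analOp^{\natural})^{\top}c$ directly using the block transpose. Transposing a vertical stack of blocks produces a horizontal stack of their transposes, namely $(\analOp^{\natural})^{\top}=[\analOp_{0,J}^{\top},\analOp_{1,1}^{\top},\dots,\analOp_{n,J}^{\top}]$. Multiplying this row-of-blocks against the column vector $c$ (partitioned into the matching length-$N$ blocks) gives the sum $\analOp_{0,J}^{\top}\fracoev[0]+\sum_{r=1}^{n}\sum_{j=1}^{J}\analOp_{r,j}^{\top}\frbcoev{r}$. Since the framelet transform matrices here are real-valued (Haar-type filters and real graph Laplacian), the conjugate transpose $\star$ coincides with the ordinary transpose $\top$, so this sum is exactly the right-hand side of \eqref{eq:tensor reconstr}. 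Theorem~\ref{thm:tensor decomp reconstr} then guarantees this equals $f_J$, completing the argument.

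The only genuine subtlety—and the step I expect to require the most care—is bookkeeping the ordering and indexing of the blocks so that the concatenation in $\analOp^{\natural}$ lines up coordinate-for-coordinate with the concatenation defining $c$, and confirming that $\star$ may be replaced by $\top$ in this real setting. Everything else is the standard identity that block-transpose-then-multiply reproduces an indexed sum, which is routine linear algebra and does not need to be belabored.
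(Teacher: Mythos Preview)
Your proposal is correct and matches the paper's approach: the paper states this corollary without proof, treating it as an immediate repackaging of Theorem~\ref{thm:tensor decomp reconstr} via the block-stacking definition of $\analOp^{\natural}$, which is precisely the bookkeeping argument you outline. Your attention to the block ordering and to the reduction of $\star$ to $\top$ in the real Haar-type setting is appropriate and slightly more careful than the paper itself, which leaves these points implicit.
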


\section{Shrinkage for Wavelet Denoising and LASSO}
Wavelets play an important role in denoising. The typical model is to learn a function $f$ on $[0,1]$ from one dimensional noisy data
\begin{equation}\label{eq:noisy data}
    y_i = f(x_i) + \sigma \epsilon_i,\quad i=1,\dots,N,
\end{equation}
for given $\sigma$ and i.i.d. white noise $\epsilon_i$. \citet{donoho1995noising} gave a method of finding an approximate $f$ from \eqref{eq:noisy data} by 1D wavelets and soft-thresholding shrinkage.
Typical steps include: 
\begin{enumerate}
    \item Apply pyramid wavelet filtering \cite{cohen1993multiresolution} for the scaled input data $y_i/\sqrt{N}$, which then yields noisy wavelet coefficients up to scale level $J$: $w_{j,k}$, $j=0,\dots, J$, $k=0,\dots,2^j-1$.
    \item Use soft-threshold nonlinearity to the high-pass wavelets $w_{j,k}$, with threshold value $\sigma\sqrt{2\log (N)/N}$. This then gives an estimate $w_{j,k}^{\sharp}$ for the original wavelet coefficients.
    \item Reconstruct the signal by using inverse wavelet transforms for the shrinkaged coefficients.
\end{enumerate}
This shrinkage method is also used as an example of statistical multivariate estimation by \citet{efron1975data}. 

Moreover, shrinkage plays a pivotal role in LASSO that estimates the coefficients of regression
\begin{equation}\label{eq:lasso model}
    {\rm min}_{\alpha,\beta_j}\left\{\sum_{i=1}^N\left(y_i-\alpha-\sum_{ j}\beta_j x_{i,j}\right)\right\}\quad \mbox{subject to~} \sum_{j}|\beta_j|\leq t.
\end{equation}
The soft-thresholding provides a solution to \eqref{eq:lasso model}, and thus a regression selection for variables. The unbiased estimate of the risk or mean-squared error for the estimator $\widehat{\beta}$ is equivalent to the risk for the shrinkage wavelet denoising approximator in \citet{donoho1994ideal}.

\paragraph{Connection to Data Compression} With the above shrinkage for wavelet denoising problem \eqref{eq:noisy data}, we can remove the noise and also compress the wavelet representation which considerably reduces the number of coefficients. The inverse wavelet transform for thresholded coefficients compresses the $l_2$-energy of the signal into a few number of large wavelet coefficients but the Gaussian white noise in the signal or wavelet coefficients does not change their energy level. Thus, the large coefficients that contain main information of the original signal can be separated and distilled from the white noise. See \citet{donoho1992wavelet,Dong2013mra}.

\section{Comparison of Existing Spectral-based GNNs}
We compare the key characteristics of our proposed \textsc{UFGConv} with four classic spectral convolution models for a better understanding. The summary is provided in Table~\ref{tab:comparison}.

\begin{table*}[!htbp]
\caption{Brief summary for the existing spectral-based GNNs.}
\label{tab:comparison}
\scriptsize
\begin{center}
\begin{tabular}{llllc}
\toprule
\textbf{Model}  & \textbf{Graph convolution}    &   \textbf{Network filter} & \textbf{Computational strategy}    & \textbf{Multi-scale}\\
\midrule
\textsc{SpectralCNN} \cite{bruna2013spectral} & $y=U{\btheta'}U^{T}X$ &  ${\btheta'}=\mbox{Diag}(\btheta)$ &  No, $U$ is based on $L = U\Lambda U^{T}$   &  $\times$\\
\textsc{ChebyShev} \cite{DeBrVa2016}   &$y=U{\btheta'}U^{T}X$           &  ${\btheta'}=\sum_{k=0}^{K-1}\alpha_k\Lambda^{k}$ &  No, $y=\sum_{k=0}^{K-1}\alpha_kL^{k}$   &  $\times$\\
\textsc{GWNN} \cite{xu2019graph} & $y=\Psi_s{\btheta'}\Psi_s^{-1}X$ &  ${\btheta'}=\mbox{Diag}(\btheta)$ &  Yes, $\Psi_s$ is graph wavelet transform &  $\surd$\\
\textsc{HaNet} \cite{li2020fast} & $y=\Phi \btheta'\Phi^{T}x$ &  $\btheta'=\mbox{Diag}(\btheta)$ &  Yes, $\Phi$ is Haar transforms    &  $\surd$\\
\midrule
\textsc{UFGConv-R} & $y={\rm ReLU}\left(\synOp\left(\btheta'(\analOp X)\right)\right)$ &  $\btheta'=\mbox{Diag}(\btheta)$ &  $\analOp,\synOp$ are framelet transforms &  $\surd$\\
\textsc{UFGConv-S} & $y=\synOp\left({\rm Shrinkage}\bigl(\btheta' (\analOp X)\bigr)\right)$ &  $\btheta'=\mbox{Diag}(\btheta)$ &  $\analOp,\synOp$ are framelet transforms &  $\surd$\\[1mm]
\bottomrule\\[-2.5mm]
\multicolumn{5}{l}{$\dagger$ $G_s=\mbox{diag}\{e^{s\lambda_1},\ldots,e^{s\lambda_N}\}$ and $\Psi_s$ can be approximated fast via Chebyshev polynomial \cite{HaVaRe2011}}
\end{tabular}
\end{center}
\end{table*}

\section{Experiments}
\label{exp_setting}
In this section, we provide more details for each experiment conducted in this paper. Also, we attach more experiment results regarding the robustness analysis of our proposed \textsc{UFGConv} on \textbf{Citeseer} and \textbf{Pubmed} in Section~\ref{sec:more:robust}, and the visualizations of the shrinkage effect on \textbf{Cora} in Section~\ref{sec:shrinkage:cora}. For all the experiments, we initialize parameters of our model \textsc{UFGConv} according to: \emph{zeros} for the bias term; \emph{xavier uniform} \citep{glorot2010understanding} for the weight matrix $W$; and \emph{uniform distribution} $\mathcal{U}(0.9, 1.1)$ for the network filter $\btheta$.

\subsection{Node Classification}
\label{sec:sm:node_classification}
The first experiment of node classification, which corresponds to the results in Table~\ref{tab:citation_node}, is performed on the benchmark citation networks. For both \textsc{UFGConv-S} and \textsc{UFGConv-R}, we use default values for the scale level $(J=2)$ and the dilation $(\textnormal{base}=2)$. The rest of the hyperparameters are tuned by using a grid search with the following search spaces listed in the ``First experiment'' column in Table~\ref{tab:searchSpace node}.

\begin{table}[t]
\caption{Hyperparameter searching space for node classification.}
\label{tab:searchSpace node}
\begin{center}
\begin{tabular}{lrr}
\toprule
\textbf{Hyperparameters} & \textbf{First experiment} & \textbf{Second experiment}\\\midrule
Learning rate & 5e-2, 1e-2, 5e-3 & 5e-3, 1e-3, 5e-4\\
Weight decay ($L_2$) \; & 5e-2, 1e-2, 5e-3 & 5e-3, 1e-3, 5e-4\\
Hidden size & 16, 32, 64 & 128, 256\\
Dropout ratio & 0.5, 0.6, 0.7 & 0.5, 0.6, 0.7 \\
Scale level & - & 2,3\\
Dilation & - & 1.5, 2.0, 2.5\\\bottomrule
\end{tabular}
\end{center}
\end{table}

The second node classification task whose results shown in Table~\ref{tab:ogbn_arxiv} is conducted on \textbf{ogbn-arxiv} from \textbf{OGB} \citep{hu2020open}. It demonstrates the superior performance of our proposed model \textsc{UFGConv} on large-scale graph-structured data. We select a variety of existing models with their publicly available prediction performances as the baselines. A full list of the leaderboard can be found in the OGB website\footnote{\url{https://ogb.stanford.edu/docs/leader_nodeprop/\#ogbn-arxiv}}. For a fair comparison, our model aligns with the model structure of \textsc{GCN} \citep{KiWe2017} and \textsc{GraphSAGE}'s \citep{hamilton2017inductive} that produces the scores on \textbf{ogbn-arxiv} leaderboard: three convolutional layers with one dropout layer inserted between every two consecutive convolutional layers. Moreover, a batch normalization \citep{ioffe2015batch} is applied after each convolutional layer (before the ReLU activation function for \textsc{UFGConv-R} or after the reconstruction process for \textsc{UFGConv-S}). Again, a grid search is performed for fine tuning the hyperparameters in the search spaces listed in the ``Second experiment'' column in Table~\ref{tab:searchSpace node}.

The results of the trade-off analysis with \textbf{ogbn-arxiv} in Figure~\ref{Fig:tradeoff} correspond to multiple runs of \textsc{UFGConv-S} with different shrinkage thresholds $\sigma=[1, 3, 5, 7, 9]$. Other hyperparameters are fixed at: $0.001$ for learning rate, $0.001$ for weight decay, $2.0$ for dilation, $0.5$ for dropout, and $3$ for scale level.

Table~\ref{tab:stats:node_classification} documents some descriptive statistics of the datasets used for the node classification tasks.

\begin{table}[th]
\caption{Summary of the datasets for node classification tasks.}
\label{tab:stats:node_classification}
\begin{center}
\begin{tabular}{lcccc}
\toprule
 & \textbf{Cora} & \textbf{Citeseer} & \textbf{Pubmed} & \textbf{ogbn-arxiv}\\
\midrule
\# Nodes & $2,708$ & $3,327$ & $19,717$ & $169,343$ \\
\# Edges & $5,429$ & $4,732$ & $44,338$ & $1,166,243$ \\
\# Features & $1,433$ & $3,703$ & $500$ & $128$ \\
\# Classes & $7$ & $6$ & $3$ & $40$ \\
\# Training Nodes & $140$ & $120$ & $60$ & $90,941$\\
\# Validation Nodes & $500$ & $500$ & $500$ & $29,799$ \\
\# Test Nodes & $1,000$ & $1,000$ & $1,000$ & $48,603$\\
Label Rate & $0.052$ & $0.036$ & $0.003$ & $0.537$ \\
\bottomrule
\end{tabular}
\end{center}
\end{table}

\subsection{Robustness Analysis on \textbf{Citeseer} and \textbf{Pubmed}}
\label{sec:more:robust}
This section supplements the perturbation analysis in Section~\ref{sec:robustness}. In this experiment, we compare our \textsc{UFGConv} (both shrinkage and ReLU models) against \textsc{GCN} \citep{KiWe2017} and \textsc{GAT} \citep{velivckovic2017graph} on the perturbed citation networks. All the models consist of two convolutional layers with a dropout layer inserted in between. The hyperparameters are, unless further specified, tuned in the same way with the same searching spaces as illustrated in the first experiment of Section~\ref{sec:sm:node_classification}. The shrinkage threshold $\sigma$ is searched over $\{0.05, 0.10, 0.15\}$. For node attribute perturbation on \textbf{Pubmed}, the dropout ratio is searched from the set $\{0, 0.5\}$; the learning rate is searched from a larger set of $\{\hbox{1e-2, 5e-3, 1e-3, 5e-4}\}$.

The structure noise on all three datasets are generated analogously. We define the noise ratio as the number of connected node pairs in the new graph divided by the number of connected node pairs in the original graph. The ratio at 1 represents the undistorted graph. For node attribute perturbation, noise defined on \textbf{Cora} and \textbf{Citeseer} follows Bernoulli distribution, where we change a small portion of node features from 0 (or 1) to 1 (or 0). A selection of noise ratios in the range from $0.1$ to $2$ with a step size $0.1$ are considered.
On \textbf{Pubmed}, as the raw data are real values, we add a Gaussian noise with zero mean and standard deviation $\sigma_s$ on the nodes, where $\sigma_s$ is the noise ratio in the plot. The noise ratio ranges from $0.01$ to $0.15$ with step size $0.01$ union $\{0.005,0.015,0.025\}$.

We refer the reader to Figure~\ref{fig:attack} in the main text for the experimental result on \textbf{Cora}, and Figures~\ref{Fig:attackCiteseer}-\ref{fig:attackPubmed} for \textbf{Citeseer} and \textbf{Pubmed}.

\begin{figure}[!htb]
    \begin{minipage}{0.48\textwidth}
    \centering
    \includegraphics[width=0.9\linewidth]{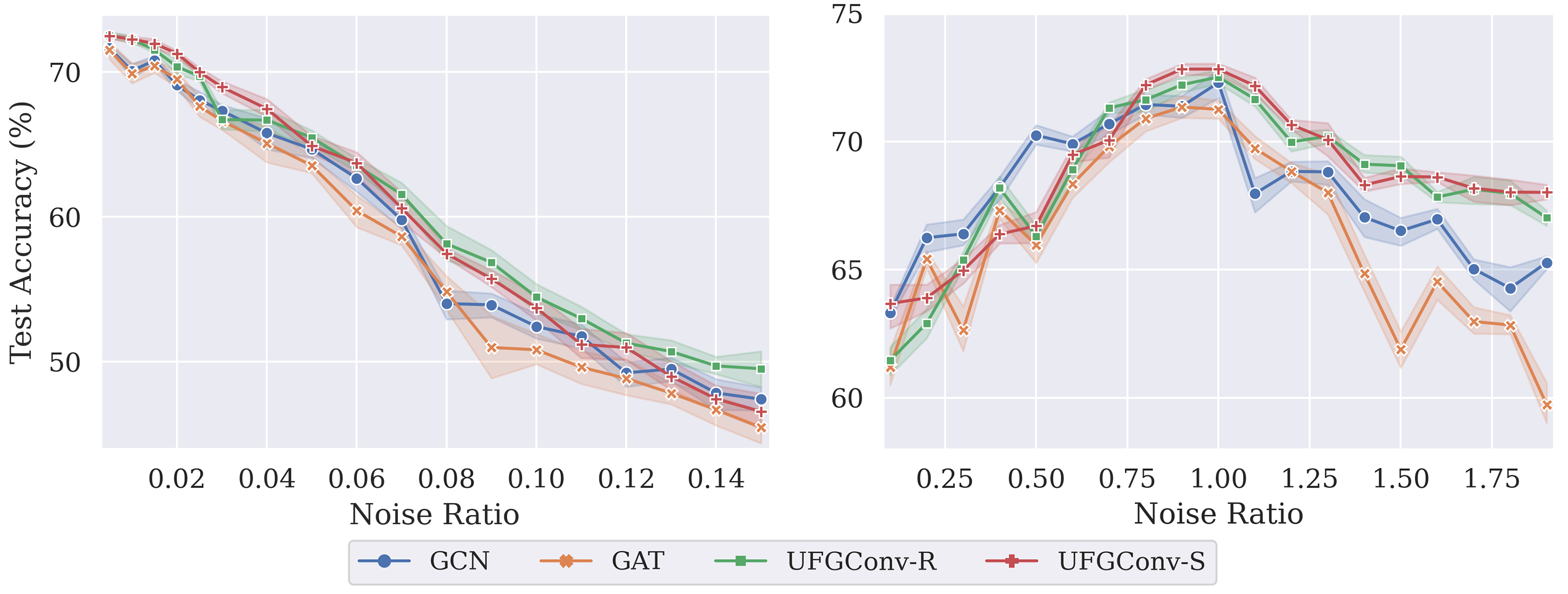}
    \vspace{-3mm}
    \caption{Node attribute (left) and graph structure (right) perturbation analysis on \textbf{Citeseer}.}
    \label{Fig:attackCiteseer}
    \end{minipage}
    \hspace{0.5cm}
    \begin{minipage}{0.48\textwidth}
    \centering
    \includegraphics[width=0.9\linewidth]{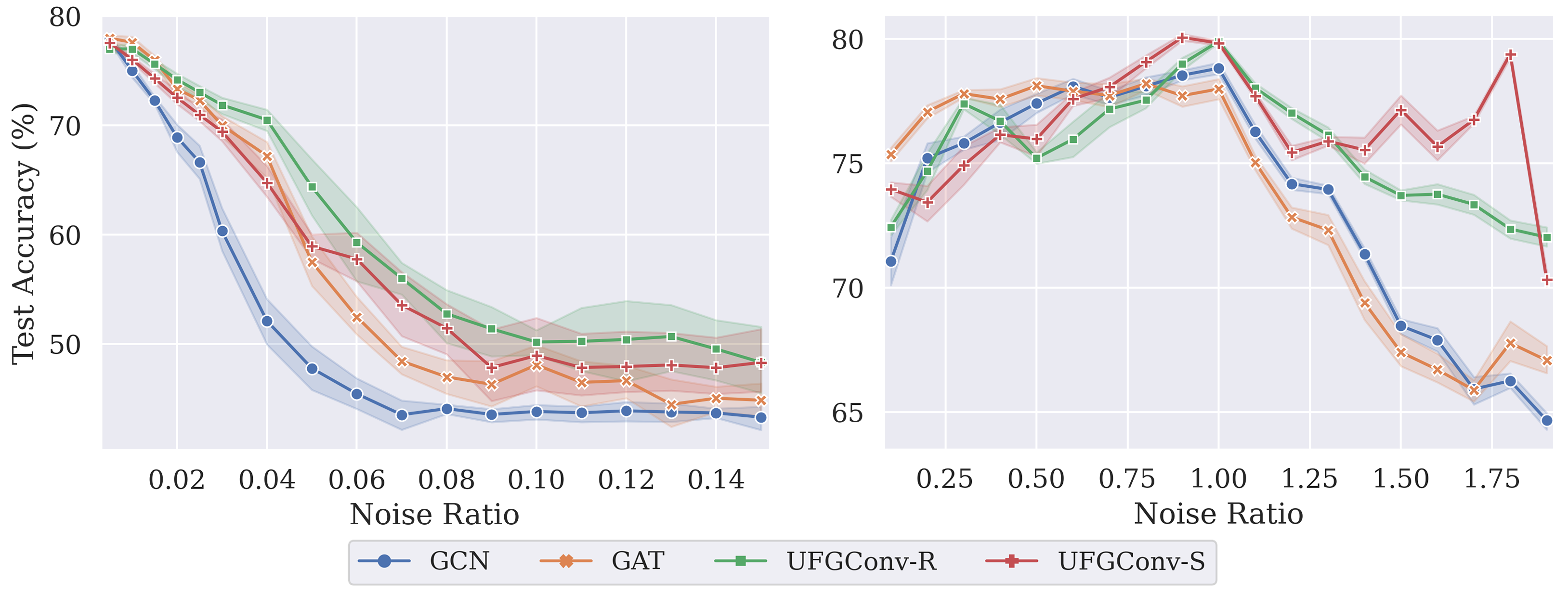}
    \vspace{-3mm}
    \caption{Node attribute (left) and graph structure (right) perturbation analysis on \textbf{Pubmed}.}
    \label{fig:attackPubmed}
    \end{minipage}
\end{figure}

The performance of our proposed \textsc{UFGConv} (with either shrinkage or ReLU activation) on \textbf{Citeseer} and \textbf{Pubmed} aligns with the superior performance on \textbf{Cora}, which indicates that our models are more robust than the baseline methods, especially on datasets added with more noise.

\subsection{Graph Classification and Regression}
We supplement basic information of the datasets for graph property prediction tasks in Section~\ref{sec:experiment ufgpool}. We report the descriptive statistics in Table~\ref{table:graph descriptive} for the six benchmark datasets used in this experiment. The `R' in the bracket of the last row of \textbf{QM7} represents regression task.

\begin{table}[th]
\caption{Summary of the datasets for the graph property prediction tasks.}
\label{table:graph descriptive}
\begin{center}
\begin{tabular}{lcccccc}
\toprule
\textbf{Datasets} & \textbf{PROTEINS} & \textbf{Mutagenicity} & \textbf{D\&D} & \textbf{NCI1} & \textbf{ogbg-molhiv} & \textbf{QM7} \\ 
\midrule
\# Graphs    & $1,113$ & $4,337$ & $1,178$ & $4,110$ & $41,127$ & $7,165$ \\
Min \# Nodes & $4$     & $4$     & $30$    & $3$     & $2$         & $4$     \\
Max \# Nodes & $620$   & $417$   & $5,748$ & $111$   & $222$    & $23$    \\
Avg \# Nodes & $39$    & $30$    & $284$   & $30$    & $26$     & $15$  \\
Avg \# Edges & $73$    & $31$    & $716$   & $32$    & $28$     & $123$ \\
\# Features  & $3$     & $14$    & $89$    & $37$    & $9$      & $0$     \\
\# Classes   & $2$     & $2$     & $2$     & $2$     & $2$      & $1$ (R)  \\
\bottomrule
\end{tabular}
\end{center}
\end{table}
\vspace{-5mm}
\begin{table}[th]
\caption{Hyperparameters for graph property prediction and sensitivity analysis.}
\begin{center}
\label{tab:searchSpace graph sensitivity}
\begin{tabular}{lrr}
\toprule
\textbf{Hyperparameters} & \textbf{Graph classify and regress} & \textbf{Sensitivity analysis}\\\midrule
Learning rate & 5e-3, 1e-3, 5e-4 & 5e-2, 1e-2, 5e-3\\
Weight decay ($L_2$) \; & 5e-3, 1e-3, 5e-4 & 5e-2 1e-2, 5e-3\\
Hidden size & 16, 32, 64 & 16, 32, 64\\
Dropout ratio & 0, 0.5 & 0.6, 0.7\\\bottomrule
\end{tabular}
\end{center}
\end{table}

The model architecture is set to 2 $\textsc{GCN}$ convolutional layers for \textbf{TU Dataset}s or 4 $\textsc{GIN}$ convolutional layers for \textbf{OGB}, which is consistent with the general \textbf{OGB} framework. We summarize the searching spaces of the key hyperparameters in Table~\ref{tab:searchSpace graph sensitivity}.
Other hyperparameters, if not specified, are set to default values. 

\subsection{Sensitivity Analysis}
The experiments documented in Section~\ref{sec:sensitity} analyze the sensitivity of our proposed \textsc{UFGConv} on the hyperparameters \emph{dilation} and \emph{scale level}. For each dataset (\textbf{Cora}, \textbf{Citeseer} or \textbf{Pubmed}), we fix the optimal hidden size to $16$, $32$ and $64$; the scale level to $2$, $2$ and $3$ (for dilation analysis); the dilation to $2$, $2$ and $2$ (for scale level analysis), respectively. We use shrinkage threshold $\sigma=1$ for \textsc{UFGConv-S}. We examine the sensitivity on dilation ranging from $1.25$ to $4$ with step $0.25$, and on scale level ranging from $1$ to $8$ with step $1$. The rest of the hyperparameters are searched over the spaces listed in Table~\ref{tab:searchSpace graph sensitivity}.

\subsection{Computational Complexity}
In Section~\ref{sec:mra}, we discuss the theoretical time and space complexities of the tensor-based framelet transforms which are the key procedures of our proposed \textsc{UFGConv}. In this section, we empirically examine the time complexity of our models against some classic graph convolutions (\textsc{GCN} \citep{KiWe2017}, \textsc{GAT} \citep{velivckovic2017graph} with $8$ attention heads, and \textsc{ChebyShev} \citep{defferrard2016convolutional} with polynomial degree $3$) on the $12$ random Erd\"{o}s-R\'{e}nyi graphs with different node sizes. In this experiment, we only time one forward propagation of each method and then repeat this procedure for $1,000$ times. The experiment is conducted on an NVIDIA\textsuperscript{\textregistered} Tesla V100 GPU card with 5,120 CUDA cores and 16GB HBM2 mounted on an HPC cluster. 

\begin{figure*}[!htb]
   \begin{minipage}{0.53\textwidth}
    \centering
    \includegraphics[width=\linewidth]{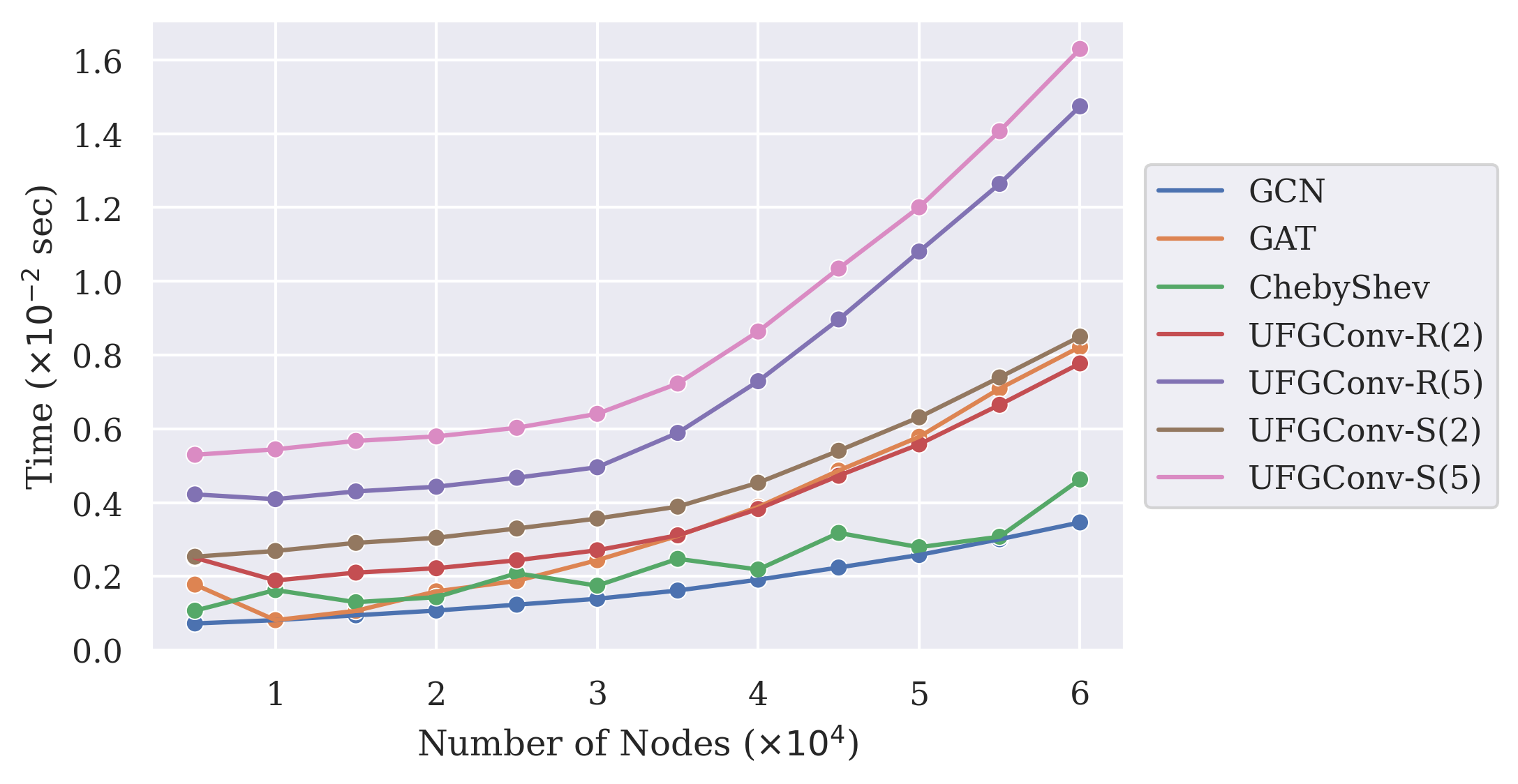}
    \vspace{-3mm}
    \caption{Computational cost (in seconds) against the node sizes of $12$ randomly generated graphs. The result is the average of $1,000$ runs. The value reported in the brackets after the name of our model in the legend indicates the corresponding scale level $J$.}\label{fig:computation}
   \end{minipage}\hfill
   \begin{minipage}{0.43\textwidth}
    \centering
    \includegraphics[width=0.8\linewidth]{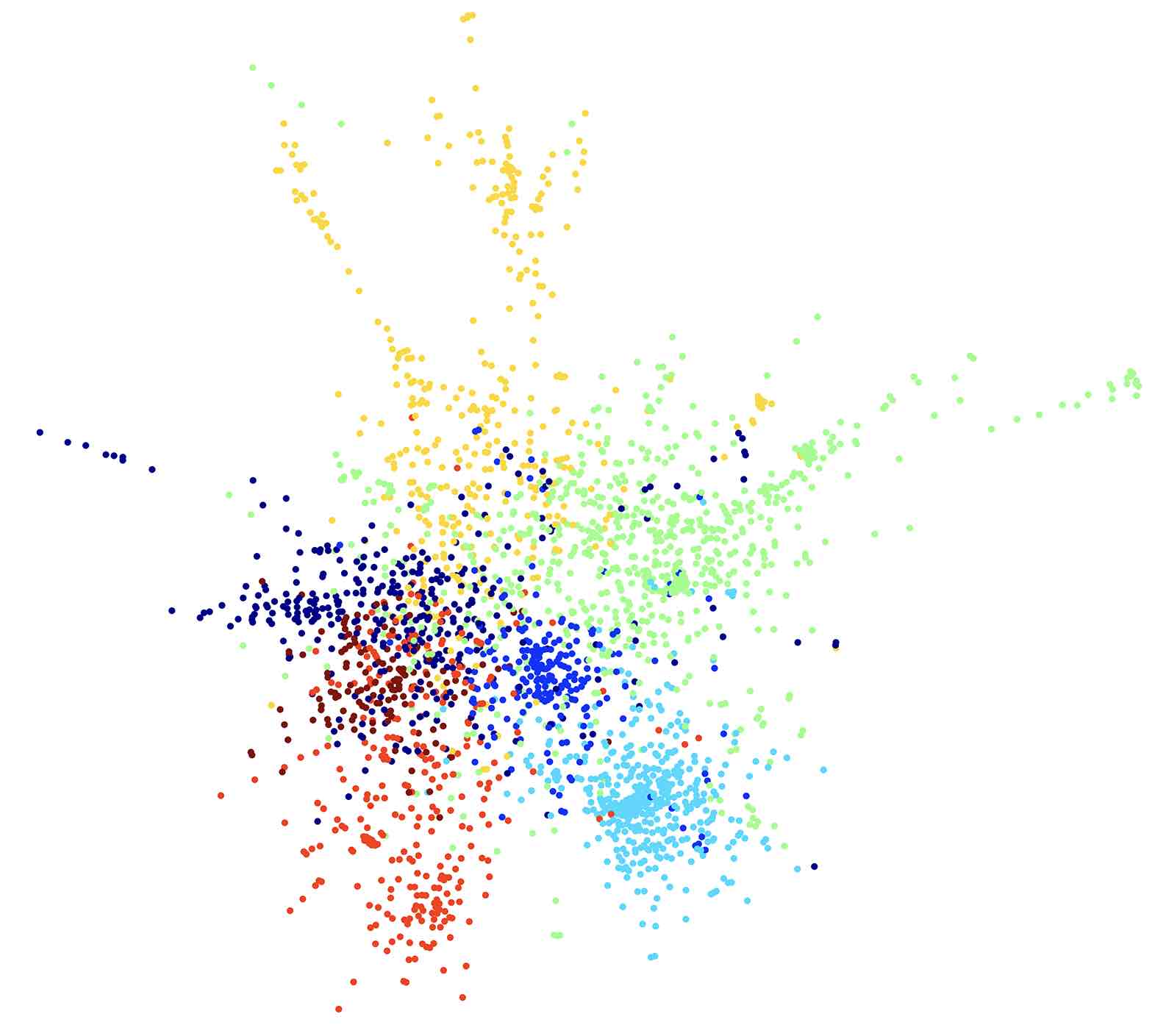}
    \caption{Visualization of \textsc{UFGConv-S} on \textbf{Cora}, with each color indicates one of $7$ classes that nodes come from.}
    \label{fig:cora7classes}
   \end{minipage}
\end{figure*}


From Figure~\ref{fig:computation}, we can observe that \textsc{UFGConv-S} has a slightly higher computational cost than its ReLU counterparts \textsc{UFGConv-R} under the same scale level, due to the extra steps for calculating the shrinkage threshold and trimming the framelet coefficients. However, the sensitivity analysis in Section~\ref{sec:sensitity} suggests that a small scale level is sufficient to achieve a better performance. Both of our proposed models (when scale level set to $2$) have a similar computational complexity as \textsc{GAT} \citep{velivckovic2017graph} (with $8$ attention heads). As the node size of a graph increases, the speed of \textsc{GAT} degrades rapidly and our models are faster than \textsc{GAT} when the graph has more than $4\times 10^4$ nodes.

\subsection{Visualization for \textsc{UFGConv} with Shrinkage on \textbf{Cora}}
\label{sec:shrinkage:cora}
In this section, we visually demonstrate the effectiveness of the shrinkage activation on \textbf{Cora}. A complete graph of all seven classes in \textbf{Cora} is displayed in Figure~\ref{fig:cora7classes}. We adopt \textsc{UFGConv-S} on \textbf{Cora} with a compression ratio of $47.7\%$ from Table~\ref{tab:citation_node} in the main text. 


The next seven figures visualize the framelet coefficients for all seven classes. In particular, we compare on each filter the coefficients at initialization, after two convolutional layers before shrinkage activation, and after shrinkage activation. All the values are retrieved under the evaluation mode, which means we leave the effect of random dropout. Also, the compressed results are only reported in the two high-pass filters, since the shrinkage activation does not apply on the low pass. 

For all the seven sets of results, a vertical comparison between the initialization and the other two columns indicate clearly that all the three passes learn the coefficients according to the true class information. Nodes with respect to the corresponding classes usually have higher absolute values that are more distinct from $0$. Also, coefficients on both high-passes compressed a critical part of coefficients after shrinkage (in green). The majority of leftovers, except for those from the highlighted class, have close-to-zero coefficients.

For a horizontal comparison, low-pass coefficients usually have higher but less distinctive values, while high-pass coefficients are more concentrated on the detailed information with respect to the individual classes.

\begin{figure*}[t]
    \centering
    \begin{annotate}{\includegraphics[width=\linewidth]{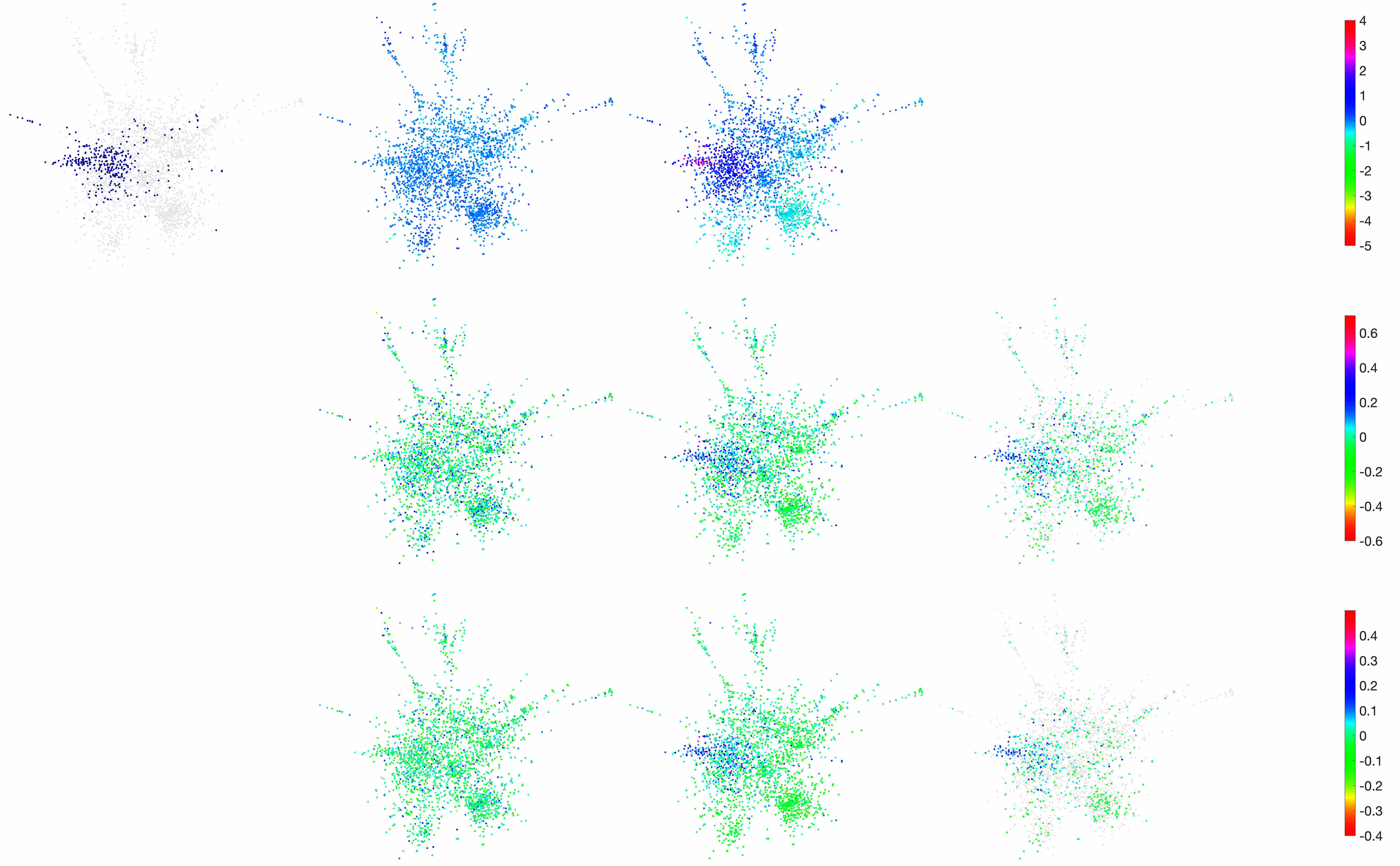}}{1}
    \note{-6.7,1.8}{Ground Truth}
    \note{-2.8,5.5}{Initialization}
    \note{1,5.5}{Before Shrinkage}
    \note{4.8,5.5}{After Shrinkage}
    \draw[ ] (-5,3.6,0) node [rotate=90] {\tiny{Low-Pass}};
    \draw[ ] (-5,0,0) node [rotate=90] {\tiny{High-Pass 1}};
    \draw[ ] (-5,-3.6,0) node [rotate=90] {\tiny{High-Pass 2}};
    \end{annotate}
    \vspace{-7mm}
    \caption{Framelet coefficients on \textbf{Cora}, Class $1$.}
    \vspace{2mm}
    \begin{annotate}{\includegraphics[width=\linewidth]{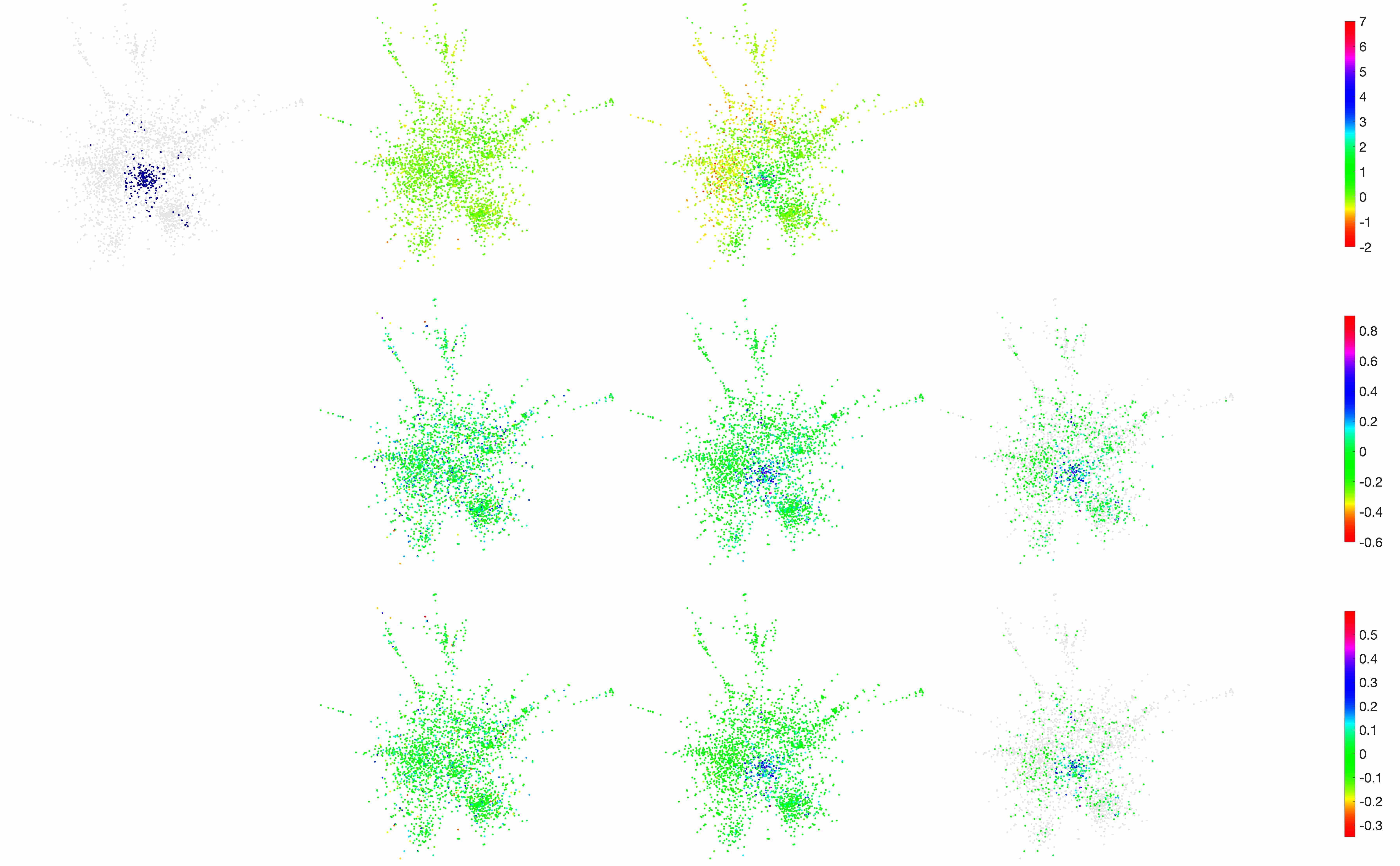}}{1}
    \note{-6.7,1.8}{Ground Truth}
    \note{-2.8,5.5}{Initialization}
    \note{1,5.5}{Before Shrinkage}
    \note{4.8,5.5}{After Shrinkage}
    \draw[ ] (-5,3.6,0) node [rotate=90] {\tiny{Low-Pass}};
    \draw[ ] (-5,0,0) node [rotate=90] {\tiny{High-Pass 1}};
    \draw[ ] (-5,-3.6,0) node [rotate=90] {\tiny{High-Pass 2}};
    \end{annotate}
    \vspace{-7mm}
    \caption{Framelet coefficients on \textbf{Cora}, Class $2$.}
\end{figure*}

\begin{figure*}[t]
    \centering
    \begin{annotate}{\includegraphics[width=\linewidth]{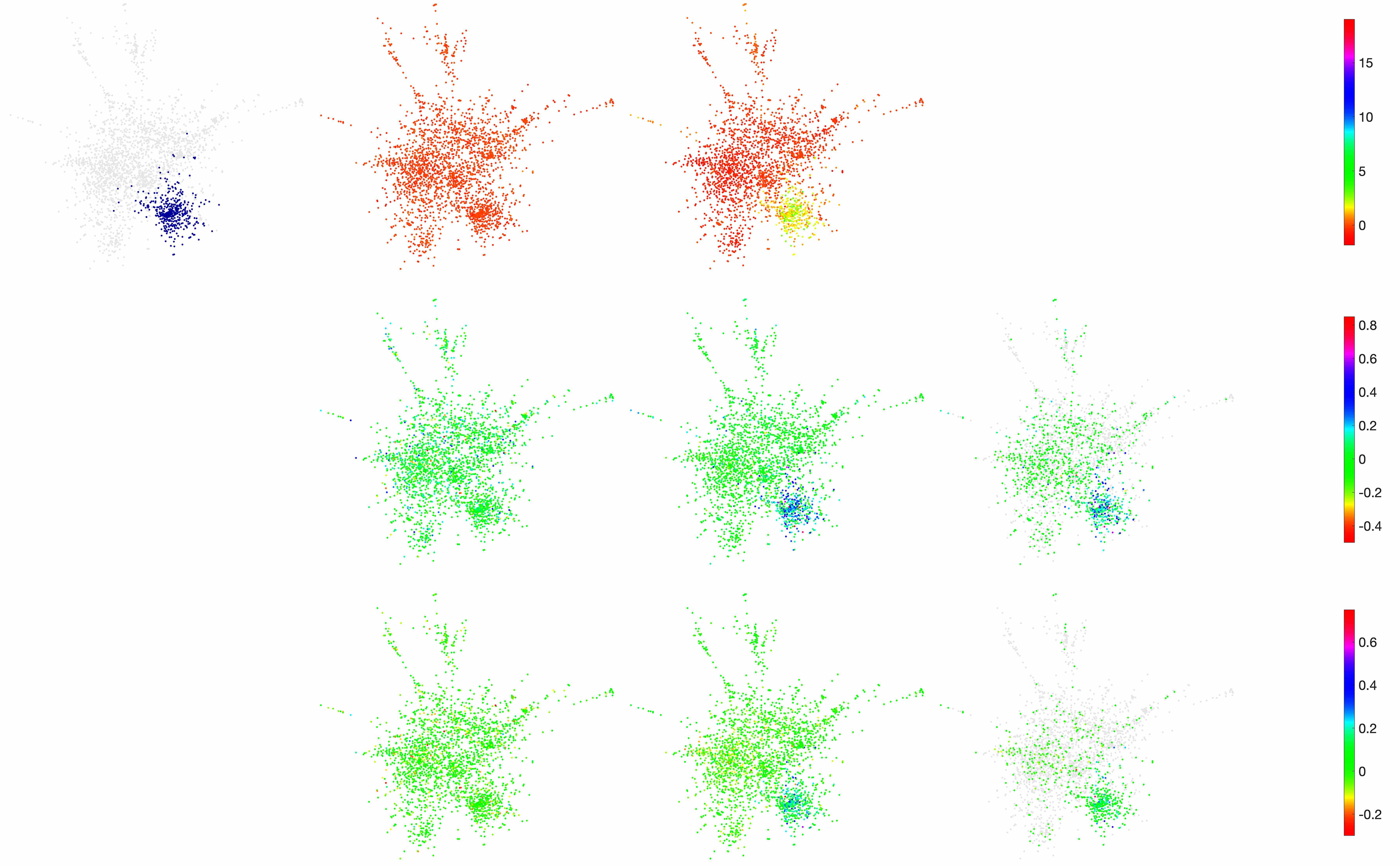}}{1}
    \note{-6.7,1.8}{Ground Truth}
    \note{-2.8,5.5}{Initialization}
    \note{1,5.5}{Before Shrinkage}
    \note{4.8,5.5}{After Shrinkage}
    \draw[ ] (-5,3.6,0) node [rotate=90] {\tiny{Low-Pass}};
    \draw[ ] (-5,0,0) node [rotate=90] {\tiny{High-Pass 1}};
    \draw[ ] (-5,-3.6,0) node [rotate=90] {\tiny{High-Pass 2}};
    \end{annotate}
    \vspace{-7mm}
    \caption{Framelet coefficients on \textbf{Cora}, Class $3$.}
    \vspace{2mm}
    \begin{annotate}{\includegraphics[width=\linewidth]{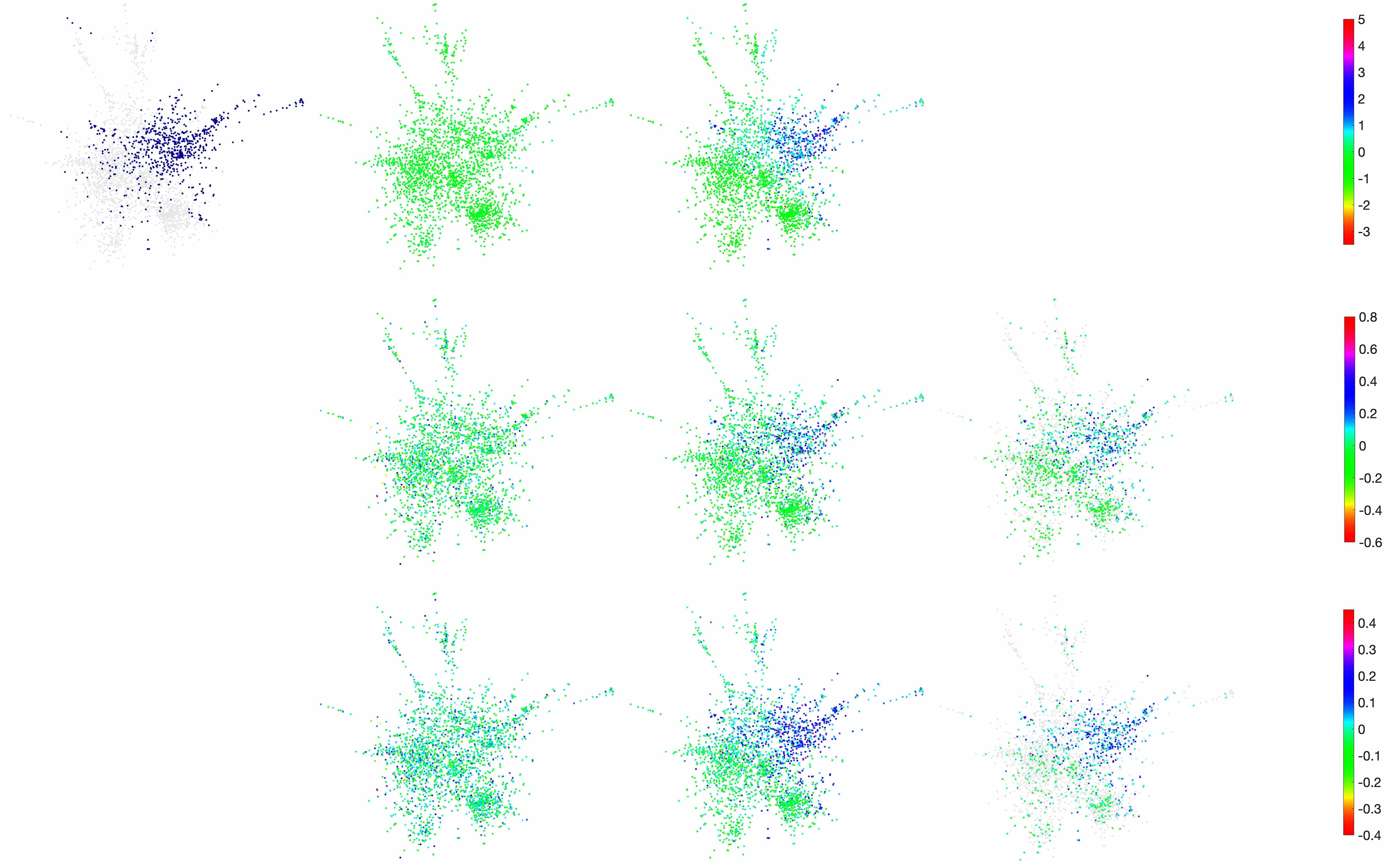}}{1}
    \note{-6.7,1.8}{Ground Truth}
    \note{-2.8,5.5}{Initialization}
    \note{1,5.5}{Before Shrinkage}
    \note{4.8,5.5}{After Shrinkage}
    \draw[ ] (-5,3.6,0) node [rotate=90] {\tiny{Low-Pass}};
    \draw[ ] (-5,0,0) node [rotate=90] {\tiny{High-Pass 1}};
    \draw[ ] (-5,-3.6,0) node [rotate=90] {\tiny{High-Pass 2}};
    \end{annotate}
    \vspace{-7mm}
    \caption{Framelet coefficients on \textbf{Cora}, Class $4$.}
\end{figure*}

\begin{figure*}[t]
    \centering
    \begin{annotate}{\includegraphics[width=\linewidth]{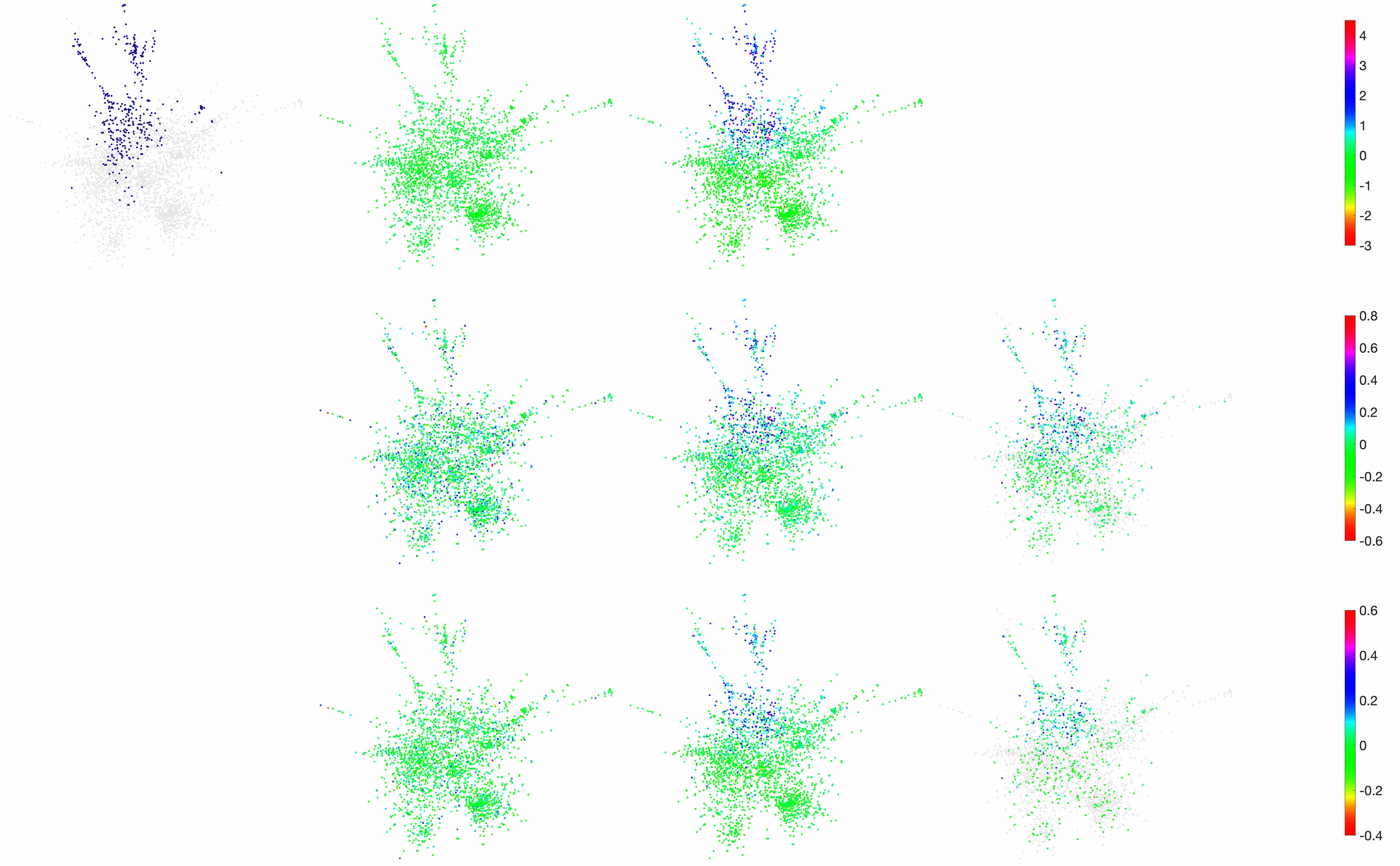}}{1}
    \note{-6.7,1.8}{Ground Truth}
    \note{-2.8,5.5}{Initialization}
    \note{1,5.5}{Before Shrinkage}
    \note{4.8,5.5}{After Shrinkage}
    \draw[ ] (-5,3.6,0) node [rotate=90] {\tiny{Low-Pass}};
    \draw[ ] (-5,0,0) node [rotate=90] {\tiny{High-Pass 1}};
    \draw[ ] (-5,-3.6,0) node [rotate=90] {\tiny{High-Pass 2}};
    \end{annotate}
    \vspace{-7mm}
    \caption{Framelet coefficients on \textbf{Cora}, Class $5$.}
    \vspace{2mm}
    \begin{annotate}{\includegraphics[width=\linewidth]{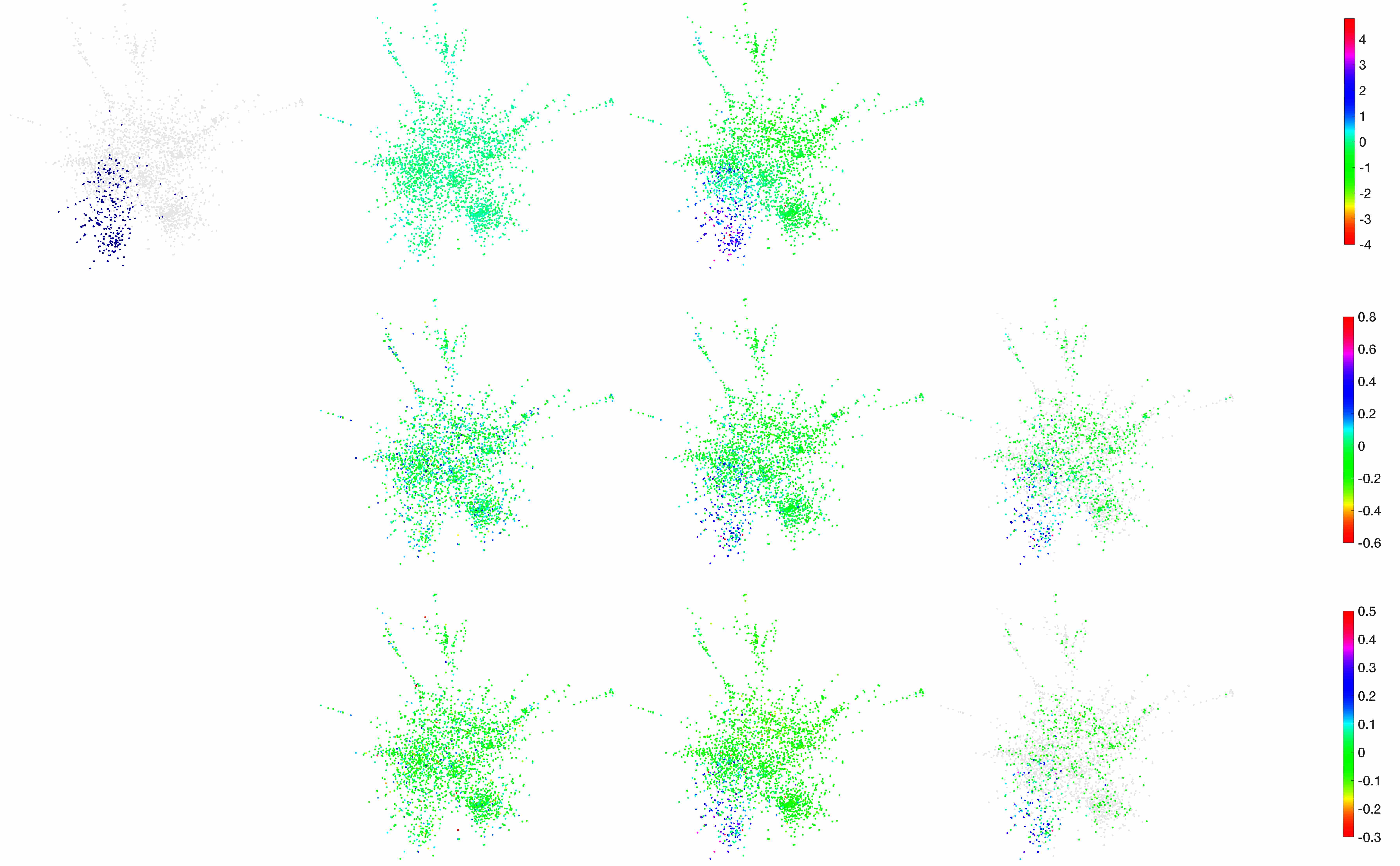}}{1}
    \note{-6.7,1.8}{Ground Truth}
    \note{-2.8,5.5}{Initialization}
    \note{1,5.5}{Before Shrinkage}
    \note{4.8,5.5}{After Shrinkage}
    \draw[ ] (-5,3.6,0) node [rotate=90] {\tiny{Low-Pass}};
    \draw[ ] (-5,0,0) node [rotate=90] {\tiny{High-Pass 1}};
    \draw[ ] (-5,-3.6,0) node [rotate=90] {\tiny{High-Pass 2}};
    \end{annotate}
    \vspace{-7mm}
    \caption{Framelet coefficients on \textbf{Cora}, Class $6$.}
\end{figure*}

\begin{figure*}[t]
    \centering
    \begin{annotate}{\includegraphics[width=\linewidth]{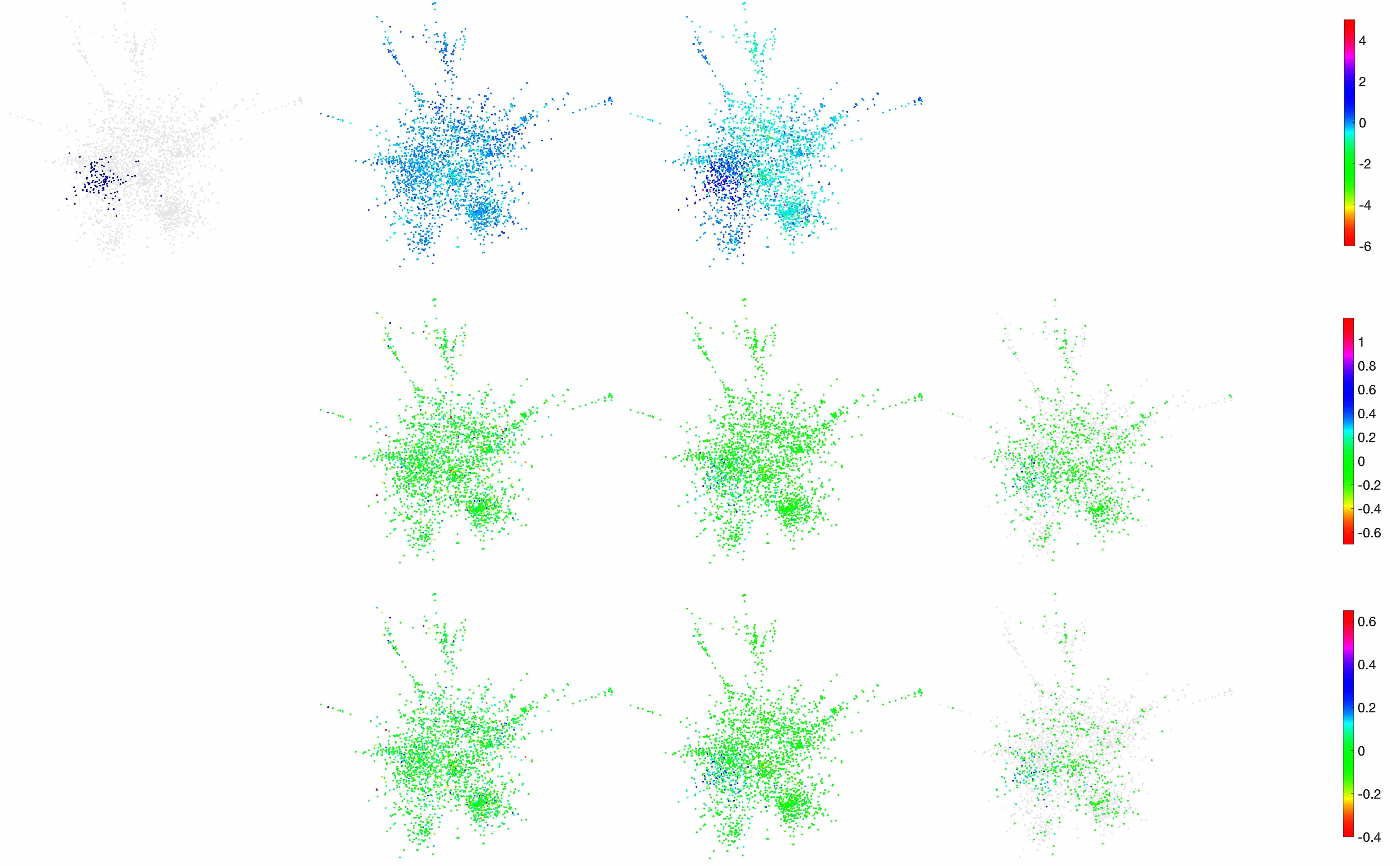}}{1}
    \note{-6.7,1.8}{Ground Truth}
    \note{-2.8,5.5}{Initialization}
    \note{1,5.5}{Before Shrinkage}
    \note{4.8,5.5}{After Shrinkage}
    \draw[ ] (-5,3.6,0) node [rotate=90] {\tiny{Low-Pass}};
    \draw[ ] (-5,0,0) node [rotate=90] {\tiny{High-Pass 1}};
    \draw[ ] (-5,-3.6,0) node [rotate=90] {\tiny{High-Pass 2}};
    \end{annotate}
    \vspace{-7mm}
    \caption{Framelet coefficients on \textbf{Cora}, Class $7$.}
\end{figure*}

\end{document}